\documentclass[11pt]{article}
\usepackage{dsfont} 
\usepackage{times}
\usepackage{latexsym}
\usepackage{bbm} %
\usepackage{bm} %
\usepackage{enumitem}
\usepackage{booktabs}
\usepackage{array, colortbl}
\usepackage{comment}
\usepackage{algorithm}
\usepackage{algpseudocode}
\usepackage{xcolor}
\usepackage[final]{acl}
\usepackage{tikz}
\usetikzlibrary{arrows.meta}
\usetikzlibrary{positioning, fit, backgrounds, calc}
\usepackage{times}
\usepackage{latexsym}
\usetikzlibrary{shadows}
\usepackage{caption}
\usepackage{multirow}
\usepackage{nicefrac}
\usepackage{microtype}
\usepackage{setspace}
\usepackage{tikz}
\usetikzlibrary{decorations.pathreplacing}

\usepackage[T1]{fontenc}

\usepackage[utf8]{inputenc}

\usepackage{microtype}

\usepackage{inconsolata}
\usepackage{amsmath}
\usepackage{amssymb}
\usepackage{amsfonts}
\usepackage{amsthm}

\usepackage{graphicx}

\usepackage{thmtools}
\usepackage{thm-restate}

\usepackage{tcolorbox}
\usepackage{enumitem}
\usepackage{xcolor}

\usepackage{graphicx}
\usepackage[disable]{todonotes}
\definecolor{mintgreen}{RGB}{152, 255, 152}
\definecolor{sepia}{RGB}{94,38,18}
\makeatletter
\newcommand*\iftodonotes{\if@todonotes@disabled\expandafter\@secondoftwo\else\expandafter\@firstoftwo\fi}  %
\makeatother

\newcommand{\stepnote}[1]{\llap{{\color{gray}\scriptsize\itshape (#1)}\quad}}

\makeatletter
\newcommand{\appendixtableofcontents}{%
  \section*{Appendix Contents}%
  \@starttoc{atoc}%
}
\newcommand{\startappendixtoc}{%
  \let\oldaddcontentsline\addcontentsline
  \renewcommand{\addcontentsline}[3]{%
    \oldaddcontentsline{##1}{##2}{##3}%
    \ifstrequal{##1}{toc}{\oldaddcontentsline{atoc}{##2}{##3}}{}%
  }%
}
\makeatother

\usepackage{xcolor}

\usepackage{tikz}
\usepackage{mleftright}
\usetikzlibrary{positioning,arrows.meta}
\newcommand{\mymacro}[1]{%
  \ifmmode
    {\textcolor{MacroColor}{#1}}%
  \else
    {\color{MacroColor}#1}%
  \fi
}
\newcommand{\defn}[1]{\textbf{#1}}
\newcommand{\defeq}{\mathrel{\stackrel{\textnormal{\tiny def}}{=}}}

\usepackage{xcolor}%
\newif\ifshowrevision          %
\showrevisiontrue              %

\ifshowrevision
  
\else
  
\fi

\newcommand{\wordcol}[7]{%
  \begin{scope}[xshift=#1]
    \ifnum#7=1
      \node[cell, fill=yellow!25, draw=yellow!60]
        at (0, -0.10) {};
    \fi
    \node[word] (w) at (0,0) {\strut #2};
    \node[stat, text=ctxcol!80!black] at (0,-0.62)
      {\textcolor{ctxcol}{#3}};
    \node[stat, text=unicol!80!black] at (0,-1.05)
      {\textcolor{unicol}{#4}};
    \node[stat, text=pmicol!80!black, font=\scriptsize\bfseries] at (0,-1.48)
      {\textcolor{pmicol}{#5}};
  \end{scope}
}

\definecolor{ctxcol} {HTML}{4B6BCA}
\definecolor{unicol} {HTML}{C0392B}
\definecolor{pmicol} {HTML}{27AE60}
\definecolor{rowA}   {HTML}{EEF2FB}
\definecolor{rowB}   {HTML}{FEF9EE}
\definecolor{hilight}{HTML}{FFF3B0}
\definecolor{hhilight}{HTML}{FFE5B4}
\definecolor{notecol}{HTML}{555555}

\newcommand{\awordinner}[6]{%
  \begingroup
  \setlength{\tabcolsep}{0pt}%
  \renewcommand{\arraystretch}{1.1}%
  \begin{tabular}[t]{c}
    \strut \makebox[3.0em][c]{{\footnotesize #1}} \\[1pt]
    \textcolor{ctxcol}{\scriptsize #2} \\
    \textcolor{unicol}{\scriptsize #3} \\
    \textcolor{unicol}{\scriptsize\itshape #4} \\
    \textcolor{pmicol}{\scriptsize\bfseries #5} \\
    \textcolor{pmicol}{\scriptsize\bfseries\itshape #6}
  \end{tabular}%
  \endgroup
}
\newcommand{\aword}[6]{\awordinner{#1}{#2}{#3}{#4}{#5}{#6}}
\newcommand{\awordh}[6]{%
  \awordinner{\colorbox{hilight}{\textbf{#1}}}{#2}{#3}{#4}{\textbf{#5}}{\textbf{#6}}%
}
\newcommand{\awordhh}[6]{%
  \awordinner{\colorbox{hhilight}{\textbf{#1}}}{#2}{#3}{#4}{\textbf{#5}}{\textbf{#6}}%
}

\newcolumntype{W}{>{\centering\arraybackslash}p{1.55cm}}
\newcolumntype{M}{>{\centering\arraybackslash}p{2.1cm}}  %

\usepackage{mathtools}
\usepackage{ellipsis}
\makeatletter
\renewcommand*{\mathellipsis}{%
  \mathinner{%
    \kern\ellipsisbeforegap%
    {\ldotp}\kern\ellipsisgap%
    {\ldotp}\kern\ellipsisgap%
    {\ldotp}\kern\ellipsisaftergap%
  }%
}
\renewcommand*{\dotsb@}{%
  \mathinner{%
    \kern\ellipsisbeforegap%
    {\cdotp}\kern\ellipsisgap%
    {\cdotp}\kern\ellipsisgap%
    {\cdotp}\kern\ellipsisaftergap%
  }%
}
\renewcommand*{\@cdots}{%
  \mathinner{%
    \kern\ellipsisbeforegap%
    {\cdotp}\kern\ellipsisgap%
    {\cdotp}\kern\ellipsisgap%
    {\cdotp}\kern\ellipsisaftergap%
  }%
}
\renewcommand*{\ellipsis@default}{%
  \ellipsis@before
  \kern\ellipsisbeforegap
  .\kern\ellipsisgap
  .\kern\ellipsisgap
  .\kern\ellipsisgap
  \ellipsis@after\relax}
\renewcommand*{\ellipsis@centered}{%
  \ellipsis@before
  \kern\ellipsisbeforegap
  .\kern\ellipsisgap
  .\kern\ellipsisgap
  .\kern\ellipsisaftergap
  \ellipsis@after\relax}
\AtBeginDocument{%
  \DeclareRobustCommand*{\dots}{%
    \ifmmode\@xp\mdots@\else\@xp\textellipsis\fi}}
\def\ellipsisgap{-.05em}
\def\ellipsisbeforegap{-.05em}
\def\ellipsisaftergap{.05em}
\makeatother

\usepackage[normalem]{ulem} %

\definecolor{ETHBlue}{RGB}{33,92,175}   %
\definecolor{ETHGreen}{RGB}{98,115,19}      %
\definecolor{ETHPurpleDark}{RGB}{140,10,89} %
\definecolor{ETHPurple}{RGB}{163,7,116} %
\definecolor{ETHGray}{RGB}{111,111,111} %
\definecolor{ETHRed}{RGB}{183,53,45}    %
\definecolor{ETHPetrol}{RGB}{0,120,148} %
\definecolor{ETHBronze}{RGB}{142,103,19}    %
\definecolor{ETHOrange}{RGB}{230, 100, 50}

\colorlet{MacroColor}{black}
\colorlet{MACROCOLOR}{MacroColor}

\definecolor{WildStrawberry}{rgb}{1.0, 0.26, 0.64}
\definecolor{Wisteria}{rgb}{0.79, 0.63, 0.86}
\definecolor{Olive}{rgb}{0.5, 0.5, 0.0}
\definecolor{Plum}{rgb}{0.56, 0.27, 0.52}
\definecolor{NavyBlue}{rgb}{0.0, 0.0, 0.5}
\definecolor{OliveGreen}{rgb}{0.33, 0.42, 0.18}
\definecolor{RoyalPurple}{rgb}{0.47, 0.32, 0.66}

\usepackage[capitalize,noabbrev]{cleveref}
\crefname{section}{\S}{\S\S}
\crefname{table}{Tab.}{Tab.}
\crefname{figure}{Fig.}{Figs.}
\crefname{algorithm}{Alg.}{}
\crefname{equation}{Eq.}{Eqs.}
\crefname{appendix}{App.}{Apps.}
\crefname{theorem}{Theorem}{}
\crefname{prop}{Proposition}{}
\crefname{definition}{Def.}{}
\crefname{cor}{Corollary}{}
\crefname{observation}{Observation}{}
\crefname{assumption}{Assumption}{}
\crefname{hyp}{Hyp.}{Hypotheses}
\crefformat{section}{\S#2#1#3}
\crefname{namedtheorem}{Hyp.}{Hypotheses}

\crefformat{equation}{\eqA Eq.~\eqB #2#1#3)}
\newcommand{\eqA}{}
\newcommand{\eqB}{(}

\newtheorem{proposition}{Proposition}

\usepackage{stmaryrd} %

\newcommand{\Var}{\mymacro{\mathbb{V}}}

\newcommand{\scon}{\mymacro{s_\text{context}}}

\newcommand{\suni}{\mymacro{s_\text{unigram}}}

\newcommand{\dll}{\mymacro{\Delta_\text{LL}}}

\newcommand{\vocab}{{\mymacro{\mathcal{W}}}}
\newcommand{\vocabitem}{{\mymacro{w}}}
\newcommand{\vocabitemSeq}{\mymacro{\textbf{\vocabitem}}}
\newcommand{\vocabmap}{\mymacro{\mathcal{R}}}
\newcommand{\vocabStrings}{\mymacro{\kleene{\vocab}}}

\newcommand{\alphabet}{{\mymacro{\Sigma}}}
\newcommand{\eoswordalphabet}{{\mymacro{\overline{\alphabet}}}}

\newcommand{\word}{{\mymacro{\sigma}}}
\newcommand{\wordseq}{{\mymacro{\boldsymbol{\sigma}}}}
\newcommand{\wordStrings}[0]{{\mymacro{\kleene{\alphabet}}}}
\newcommand{\wordstrnonempty}{\mymacro{\kleeneplus{\alphabet}}}

\newcommand{\kleene}[1]{#1^*}
\newcommand{\kleeneplus}[1]{#1^+}

\newcommand{\eos}{{\mymacro{\textsc{eos}}}}
\newcommand{\counter}{{\mymacro{\mathcal{\#}}}}

\DeclareTextCommandDefault{\textvisiblespace}{%
  \mbox{\kern.06em\vrule \@height.3ex}%
  \vbox{\hrule \@width.3em}%
  \hbox{\vrule \@height.3ex}}

\newcommand{\pcorpus}{p_{\mathrm{C}}} 
\newcommand{\phuman}{p_{\mathrm{H}}}

\definecolor{sfmcolor_v2}{RGB}{68, 114, 196}  

\definecolor{pcolor1}{RGB}{0,0,0}  %
\definecolor{pcolor2}{RGB}{0,0,0}     %
\definecolor{pcolor3}{RGB}{0,0,0}  

\newcommand{\ptag}[2]{%
  \setlength{\fboxsep}{1pt}%
  {\textbf{\color{#1!80!black}#2}}%
}

\newcommand{\pone}{\ptag{pcolor1}{Best estimate principle}}
\newcommand{\ptwo}{\ptag{pcolor2}{Cognitive plausibility principle}}
\newcommand{\pthree}{\ptag{pcolor3}{Consistent distribution principle}}

\newcommand{\X}[0]{{\mymacro{X}}}
\newcommand{\N}[0]{{\mymacro{N}}}
\newcommand{\subsampsize}[0]{{\mymacro{J}}}
\newcommand{\bootsampsize}[0]{{\mymacro{K}}}
\newcommand{\E}[0]{{\mymacro{\mathbbm{E}}}}

\newcommand{\LM}{{\mymacro{p_{\alphabet}}}}
\newcommand{\WordLM}{{\mymacro{p_{\vocab}}}}
\newcommand{\PrefixWordLM}{\mymacro{\prefixLanguageOp{\WordLM}}}
\newcommand{\prefixLanguageOp}[1]{\vec{#1}}

\newcommand{\prefixLM}{{\mymacro{\prefixLanguageOp{\LM}}}}

\newcommand{\freqNorm}{{\mymacro{u}}}
\newcommand{\mcest}{{\mymacro{\widehat{\freqNorm}}}}
\newcommand{\rbest}{{\mymacro{\widehat{\freqNorm^*}}}}

\renewcommand{\Pr}[2][]{\mathop{\mathbb{P}}_{\substack{#1}}\mleft[#2\mright]}

\newcommand{\canon}{\counter^{\mathrm{can}}}                 
\newcommand{\resid}{\counter^{\mathrm{res}}}                 
\newcommand{\mmccount}{\widehat{\counter}^{\,\mathrm{MMC}}}
\newcommand{\rbestapx}{\rbest_{\mathrm{can}}} 
\newcommand{\mmccountcan}{\widehat{\counter}^{\,\mathrm{MMC}}_{\mathrm{can}}}

\usepackage[safe]{tipa}
\newcommand{\uzurich}{\textnormal{1}}
\newcommand{\ethz}{\textnormal{2}}
\newcommand{\chifro}{\textnormal{3}}
\newcommand{\mpi}{\textnormal{4}}

\title{Revisiting the Estimation of Unigram Surprisal in Computational Psycholinguistics}
\author{Zachary Hopton\textsuperscript{\uzurich} \quad Francesco Ignazio Re\textsuperscript{\ethz}\quad Samuel Kiegeland\textsuperscript{\ethz,\,\chifro} \\ \textbf{Andreas Opedal}\textsuperscript{\ethz,\,\mpi} \quad
  \textbf{Eleanor Chodroff\textsuperscript{\uzurich} \quad Ryan Cotterell\textsuperscript{\ethz}} \\ \\
  \textsuperscript{\uzurich}University of Zurich~\;~\textsuperscript{\ethz}ETH Zürich~\;~\textsuperscript{\chifro}CHI-FRO~\;~%
  \textsuperscript{\mpi}MPI for Intelligent Systems, Tübingen \\
  \texttt{\href{mailto:zacharywilliam.hopton@uzh.ch}{zacharywilliam.hopton@uzh.ch} \href{mailto:ryan.cotterell@inf.ethz.ch}{ryan.cotterell@inf.ethz.ch}}
}

\begin{document}
\maketitle
\begin{abstract}
 Contextual and unigram surprisal are standard predictors of reading time in computational psycholinguistics. While contextual surprisal is typically derived from neural language models, unigram surprisal is usually obtained from independent corpus counts. Although both quantities are deterministic functionals of a single underlying distribution---capturing conditional and marginal structure respectively---this practice breaks their probabilistic link and may introduce estimation error from the mismatch. We propose three principles for estimating unigram surprisal in psycholinguistic research, including the requirement that both quantities be derived from the same language model. We operationalize this principle by introducing a Monte Carlo estimator of unigram surprisal from a language model, along with a marginalized variant that has lower variance empirically. When unigram surprisal is re-estimated under this unified framework, we find evidence that predictors change in their predictive contribution to reading time. We also confirm prior findings that frequency can be a stronger predictor of processing effort than contextual surprisal under this unified framework.

 \looseness=-1
\end{abstract}

\section{Introduction}
Human sentence processing is incremental
\citep{marslen1975sentence, tanenhaus1995integration, rayner2009language} and predictive \citep{altmann1999incremental, delong2005probabilistic}: comprehenders do not wait until a sentence is complete before interpreting it, but instead build partial representations and continuously generate expectations about upcoming input. Contextual surprisal---the negative log probability of a linguistic unit given its preceding context as derived from language models (LMs)---quantifies the degree to which incoming content aligns with these predictions. Surprisal has been found to robustly predict observed processing effort in humans across paradigms and languages \citep{haleprobabilistcearly, wilcox2020predictive, kuribayashi-etal-2022-context, shain2024large}, lending broad empirical support to expectation-based theories of comprehension \citep{levy2008expectation,kuperbergjaeger2016,ryskin2023prediction}. Separately, the frequency of a unit in a language also shapes how quickly it is processed: more frequent units are recognized faster, independent of their contextual fit \citep{hare1995frequency, bybee1998emergent, shain2024word}. This effect is captured by unigram surprisal---the negative log probability of a unit after marginalizing over all contexts. While estimating contextual surprisal has benefited greatly from advances in neural LMs \citep{goodkind-bicknell-2018-predictive, wilcox2020predictive, oh-schuler-2023-transformer, de-varda-marelli-2023-scaling, wilcox-etal-2023-language}, unigram surprisal has received comparatively little methodological attention. Typically, it is derived from small-data corpora or lexical databases \citep{robyn_speer_2022}.

\begin{figure}[t]
    \centering\small
    \colorbox{rowA}{%
      \parbox{\dimexpr\columnwidth-0.1\fboxsep\relax}{\vspace{3pt}%
        \begin{tabular}{@{}c@{\hspace{1.5pt}}c@{\hspace{1.5pt}}c@{\hspace{3pt}}c@{\hspace{3pt}}c@{\hspace{3pt}}|c@{\hspace{5pt}}c@{}}
          \aword{The}{21.75}{4.22}{16.04}{$-$17.53}{$-$5.72} &
          \aword{soup}{17.99}{16.11}{21.41}{$-$1.88}{3.42} &
          \aword{was}{6.77}{7.24}{13.54}{0.47}{6.77} &
          \aword{seasoned}{13.66}{18.24}{28.38}{4.57}{14.71} &
          \aword{with}{2.89}{7.14}{13.14}{4.25}{10.25} &
          \awordh{salt}{2.96}{14.62}{19.63}{11.66}{16.67} &
          \awordhh{coriander}{8.58}{20.10}{31.52}{11.52}{22.94}
        \end{tabular}\vspace{3pt}%
      }%
    }
    \caption{%
    \textcolor{ctxcol}{Contextual surprisal}, \textcolor{unicol}{unigram surprisal
    (\textsc{wf}\,/\,\emph{\textsc{MMC}})}, and \textcolor{pmicol}{PMI (\textsc{wf}\,/\,\emph{\textsc{MMC}})}
    for each word in the prefix \textit{The soup was seasoned with} and two target
    words \textit{\textbf{salt}} and \textit{\textbf{coriander}}. Unigram surprisal
    is obtained from two sources: the \texttt{wordfreq} package (\textsc{wf};  \citealp{robyn_speer_2022}) and a
    marginalized Monte Carlo estimate (\textsc{MMC}; italicized) from mGPT \citep{shliazhko-etal-2024-mgpt},
    which is also used for contextual surprisal.}
    \label{fig:example-sentence}
\end{figure}
\vspace{0em}

In a sense made specific by \citet{opedal-etal-2024-role}, contextual surprisal and unigram surprisal are correlated in naturalistic text. That being the case, modern studies on surprisal have controlled for unigram surprisal \citep{monsalve2012lexical, smith2013effect, shain-2019-large, wilcox-etal-2023-testing,  shain2024word}. However, controlling for unigram surprisal is only meaningful when both estimates come from the same distribution. In the example in \cref{fig:example-sentence}, we calculate \textcolor{ctxcol}{contextual surprisal} from the language model mGPT \citep{shliazhko-etal-2024-mgpt}, while we calculate \textcolor{unicol}{unigram surprisal} with two methods: estimates from the \texttt{wordfreq} package \citep{robyn_speer_2022} and estimates obtained with a marginalized Monte Carlo (MMC) estimator over the same language model as contextual surprisal. We observe that the \textcolor{pmicol}{pointwise mutual information (PMI)} (the difference between unigram and contextual surprisal) is nearly identical for \textit{salt} and \textit{coriander} under \texttt{wordfreq}, but diverges substantially under the MMC estimator. This illustrates the estimation error that can arise from a mismatch in unigram and contextual surprisal distributions. 

Motivated by this discrepancy, we put forward a set of three principles to guide approaches to estimating unigram surprisal for research on sentence processing and psycholinguistics more broadly: the best estimate principle, the cognitive plausibility principle, and the consistent distribution principle (see \cref{fig:intro-principles}; \cref{sec:principles}).
Though we argue that all three principles should guide unigram surprisal estimations, we note that no existing approach to estimating unigram surprisal satisfies the consistent distribution principle. Such an approach would eliminate the potential for error introduced by mismatched 
estimation procedures and allow practitioners to leverage the data scale of large LMs for unigram surprisal estimation. 
To that end, we introduce a Monte Carlo estimator that estimates unigram frequency from the same distribution as an LM by sampling strings from the model and calculating a unit's empirical frequency across these samples. We go on to derive a marginalized variant of the estimator, which we empirically demonstrate to be a more efficient estimator; that is, given the same number of samples, the marginalized variant will obtain lower variance estimates than the naive Monte Carlo estimator.
\looseness=-1

\begin{figure}[t]
    \centering
    \begin{tcolorbox}[colback=sfmcolor_v2!10, colframe=sfmcolor_v2!60,
                  title=\textbf{Principles}, arc=4pt, boxrule=0.8pt]
\fontsize{10}{13}\selectfont
\begin{description}[style=unboxed, leftmargin=0pt, labelindent=0pt]
  \item[\pone.] Unigram surprisal estimates should closely approximate the marginal probability of units according to the true distribution over units that generated the corpus used for estimation.
  \item[\ptwo.] Estimates of unigram surprisal should aim to approximate the marginal probability of units according to the human LM.
  \item[\pthree.] Unigram and contextual surprisal estimates should be derived from the same underlying LM.\looseness=-1
\end{description}
\end{tcolorbox}
    \vspace{-10pt}
    \caption{Principles for estimating unigram surprisal.}
    \label{fig:intro-principles}
\end{figure}
\looseness=-1

In the empirical portion of the article,
we study several predictors of reading times across eleven languages, as in \citet{opedal-etal-2024-role}, and directly compare models of reading time fit using unigram surprisal values derived from the \texttt{wordfreq} lexical database \citep{robyn_speer_2022} and values from our MMC estimator.
Like \citet{opedal-etal-2024-role}, we find that unigram surprisal is a stronger predictor of reading time than contextual surprisal; however, we find that the predictive power of some predictors significantly changes when unigram surprisal is calculated with our MMC estimator. 
Though our empirical findings focus specifically on sentence processing, they indicate that the estimation of unigram surprisal should be reconsidered more broadly in computational psycholinguistics.

\section{Language Models}
\label{sec:prelim}

\paragraph{Preliminaries.}

An alphabet is a finite, non-empty set. Let $\alphabet$ be an alphabet of tokens.
We denote elements $\word \in \alphabet$ by lowercase Greek letters, and strings
$\wordseq = \word_1 \cdots \word_T$ by bold lowercase Greek letters, where
$\word_t$ denotes the $t^{\text{th}}$ token of $\wordseq$.
Furthermore, let $\vocab$ be an inventory of words, i.e., a non-empty set which may be finite or countably infinite \citep{snbjarnarson2026transducing, kiegeland-etal-2026-proper}. We denote elements of $\vocab$ by $\vocabitem$.
$\wordStrings$ denotes the set of all finite strings formed by concatenating tokens in $\alphabet$, including the empty string $\varepsilon$.
We write $\lvert \wordseq \rvert = T$ for the length of $\wordseq = \word_1 \cdots \word_T$.
A \defn{prefix} of $\wordseq$ is an initial substring; specifically, $\wordseq_{<t}$ consists of its first $t-1$ tokens.
We write $\wordseq' \preceq \wordseq$ to indicate that $\wordseq'$ is a prefix of $\wordseq$, and $\wordseq' \prec \wordseq$ for a \emph{strict} prefix.
Concatenation of $\wordseq$ and $\wordseq'$ is written $\wordseq\wordseq'$.
Finally, we introduce a distinguished end-of-string symbol $\eos \notin \alphabet$ and define the augmented alphabet $\eoswordalphabet \defeq \alphabet \cup \{\eos\}$.

\paragraph{Language models.}
Given an alphabet $\alphabet$, a \defn{language model} (\defn{LM}) is a probability distribution over $\wordStrings$.
We denote a language model as $\LM$
and use $\X$ to denote a random variable taking values in $\wordStrings$ distributed according to $\LM$.
Given a string $\wordseq \in \wordStrings$, its prefix probability is:\footnote{We remark that the prefix probability is \underline{not} a probability distribution in general, since it is unnormalized.}
\begin{subequations}
\begin{align}
\prefixLM\left(\wordseq\right) &\defeq \underset{\X\sim \LM}{\mathbb{P}}\left[\wordseq \preceq \X\right] \\ 
&= \sum_{\wordseq'\in\wordStrings} \mathbbm 1 \left\{\wordseq \preceq \wordseq'\right\} \LM\left(\wordseq'\right).
\end{align}
\end{subequations}
That is, the prefix probability of $\wordseq$ is the sum over the probabilities of all possible continuations of $\wordseq$. 

\paragraph{Contextual probability.}
The probability of a string can be factored autoregressively using prefix-conditioned probabilities. We assume that $\forall \wordseq \in \wordStrings \colon \LM\!\left(\wordseq\right) > 0$.\footnote{This holds true for transformer-based LMs due to the softmax function.} 
First, we define the conditional prefix probability
of a token $\word \in \alphabet$:
\begin{equation}
    \prefixLM\left(\word\mid\wordseq\right) \defeq \frac{\prefixLM\left(\wordseq\word\right)}{\prefixLM\left(\wordseq\right)}.
\end{equation}
The prefix probability of $\eos$ is defined as
\begin{equation}
    \prefixLM\left(\eos\mid \wordseq\right) = \frac{\LM\left(\wordseq\right)}{\prefixLM\left(\wordseq\right)}.
\end{equation}
Thus, we will also use $\prefixLM\left(\cdot \mid \wordseq\right)$ to denote a probability distribution over $\eoswordalphabet$.
Using the conditional prefix probabilities, there always exists a left-to-right autoregressive factorization of the probability of a string under an LM:
\begin{equation}
    \LM\left(\wordseq\right) = \prefixLM\left(\eos\mid\wordseq\right)\prod^{T}_{t=1}\prefixLM\left(\word_t\mid\wordseq_{<t}\right).
\end{equation}

\paragraph{Unigram probability.}
We define the unigram 
probability of a word $\vocabitem \in \vocab$ in terms of the frequency with which it occurs in strings sampled from a language model $\LM$. Following \citet{kiegeland-etal-2026-proper}, we assume a deterministic \defn{unit parser} $\pi \colon \wordStrings \rightarrow \vocabStrings$ mapping token strings to word strings.\footnote{Parsing into units is ambiguous in general (see \citet{kiegeland-etal-2026-proper}). Here, we adopt the assumption that parsing is deterministic, which holds for the units we study.} Its inverse, the \defn{realization}, is a relation, since a word string is realized by many token strings. We further assume the positions of $\pi(\wordseq)$ are realized by pairwise disjoint, nonempty, contiguous token spans of $\wordseq$, in left-to-right order, so that $\lvert\pi(\wordseq)\rvert \le \lvert\wordseq\rvert$. The pushforward of $\LM$ through $\pi$ is then defined as
\begin{equation}
\label{eq:pushforward}
\WordLM(\boldsymbol{\vocabitem}) \defeq \sum_{\wordseq\in\pi^{-1}(\boldsymbol{\vocabitem})}
\LM(\wordseq), \qquad \boldsymbol{\vocabitem}\in\vocabStrings.
\end{equation}
We write $\PrefixWordLM$ for the prefix probability of $\WordLM$, so that $\PrefixWordLM\left(\vocabitem \mid \vocabitemSeq\right)$ is the conditional next-word probability. To account for the fact that a word may admit multiple tokenizations, we write
\[
\vocabmap(\vocabitem) \defeq \left\{\wordseq \in \wordstrnonempty \colon \pi(\wordseq) = \vocabitem\right\}
\]
for the set of token strings realizing $\vocabitem$ alone (identifying $\vocabitem$ with the length-one word string), where $\wordstrnonempty = \wordStrings \setminus \{\varepsilon\}$.
For a token string $\wordseq$, the number of occurrences of $\vocabitem$ is read off its parse:\footnote{A target spanning several units is treated as the corresponding word subsequence, with the count defined analogously.}
\begin{equation}
    \counter\!\left(\wordseq,\vocabitem\right)
\defeq
\sum_{t=1}^{\lvert\pi\left(\wordseq\right)\rvert}
\mathbbm{1}\!\left\{\pi\left(\wordseq\right)_t=\vocabitem\right\}.
\end{equation}
\Cref{ap:boundaries} describes how we implement $\pi$, and hence $\counter$, for the tokenizer used in our experiments.
The expected count of a word $\vocabitem \in \vocab$ under $\LM$ is then
\begin{equation}
\label{eq:freq01}
\begin{aligned}
    &\mu\!\left(\vocabitem\right) \defeq \E_{\X\sim \LM}\!\left[\counter\!\left(\X, \vocabitem\right)\right]= \\ 
    &\sum_{\wordseq\in\wordStrings} \LM\!\left(\wordseq\right)\counter\!\left(\wordseq,\vocabitem\right).
\end{aligned}
\end{equation}
We can then derive a notion of a word's unigram probability by normalizing this expectation to a proper probability distribution over $\vocab$:
\begin{equation}
\label{eq:freq02}
    \freqNorm\!\left(\vocabitem\right) = \frac{\mu\!\left(\vocabitem\right)}{\sum_{\vocabitem' \in \vocab}\mu\!\left(\vocabitem'\right)}.
\end{equation}
Note that---like the contextual prefix probability $\prefixLM\!\left(\cdot \mid \wordseq\right)$---$\freqNorm\!\left(\vocabitem\right)$ is a distribution over $\vocab$, and is only defined for a single word $\vocabitem \in \vocab$.

\noindent\textit{Remark.} Let $N_{\vocab}(\wordseq)$ denote the number of occurrences in $\wordseq$ of words in $\vocab$. By linearity of expectation,
$\sum_{\vocabitem\in\vocab}\mu\!\left(\vocabitem\right)
=\E_{\X\sim\LM}\!\big[\textstyle\sum_{\vocabitem\in\vocab}\counter\!\left(\X,\vocabitem\right)\big]
=\E_{\X\sim\LM}\!\left[N_{\vocab}(\X)\right]$, where the last equality is immediate, since each position of $\pi(\wordseq)$ is occupied by exactly one word of $\vocab$. Thus \cref{eq:freq02} normalizes by the expected word count.

\section{Which Language Model?}\label{sec:which-lm}

Formal theories of language, most notably the generative tradition
\citep{chomsky1957syntactic,chomsky1965aspects}, have historically separated a
speaker's idealized knowledge of grammatical rules (\defn{competence}) from the
actual, often imperfect, use of language in real-time processing (\defn{performance}).
Probability has found a central role in explaining the latter
\citep{bod-etal-2003-probabilistic,jurafsky2003probabilistic,chater2006probabilistic,levy2008expectation}: under a probabilistic, performance-oriented view
a model of language is then evaluated not by whether it generates the correct set of
grammatical strings, but by how well it mirrors the behavior of human speakers. The
question is thus not \emph{whether} probability is relevant for modeling linguistic
performance, but \emph{what} distribution it should approximate. We contrast two
viewpoints on this question, which we refer to as the frequentist and the cognitive
views.\looseness=-1

\paragraph{The frequentist view.}
One approach to estimating an LM
is to optimize it to place high
probability on a large, heterogeneous corpus---comprising, e.g., collaborative
text on Wikipedia, conversational text on Reddit, and the structured,
iteratively revised prose of digitized books
\citep{radford2019language,groeneveld-etal-2024-olmo}.\footnote{Instruction-tuned \citep{ouyang2022instruction} and reasoning-oriented \citep{deepseekai2025deepseekr1incentivizingreasoningcapability} variants involve
additional training objectives that shift the distribution away from raw
corpus frequency, complicating a straightforwardly frequentist interpretation
of the resulting probabilities.}
Under the frequentist view, this LM approximates the probability
distribution that generated the corpus text, which we denote $\pcorpus$. 
The cognitive relevance of $\pcorpus$ has been widely studied---corpus frequency has been shown to play a central role in language
use and change \citep{bybee1998emergent,bybee_introduction_2001}, accounting for
phonological reduction 
\citep{bybee1999effect,bybee2002word} and
grammaticalization \citep{haspelmath_why_1999}. On the other hand, the
generative process underlying such heterogeneous corpora is complex: it
involves millions of individuals writing diverse texts in an iterative fashion
over extended periods of time, and the resulting corpora are several orders of
magnitude larger than the linguistic input any single human observes over a
lifetime \citep{warstadt2022what}.
\looseness=-1

\paragraph{The cognitive view.}
In contrast, the cognitive view interprets the LM as a proxy for
$\phuman$---a theoretical construct representing the internalized probability
distribution over linguistic strings that a human
carries in their mind. 
Because we lack direct access to such a theoretical construct, a corpus-estimated
LM often serves as a practical approximation, an assumption
underlying much of psycholinguistic modeling
\citep[\emph{inter alia}]{wilcox2020predictive,oh-schuler-2023-surprisal,shain2024large}.
However, as LMs have grown larger---in terms of both training data
and parameter count---they have become \emph{less} predictive of human
behavior. This phenomenon is known as the inverse-scaling law
\citep{oh-schuler-2023-surprisal}: predictive accuracy of human data improves
with scale only up to a point, after which it degrades.
This motivates an alternative approach in which the LM is directly aligned with human
psychometric data to obtain more cognitively plausible probability estimates
\citep{kiegeland-etal-2024-reverse}.
The construct $\phuman$ need not be an unconstrained distribution over
arbitrarily long contexts, either. The cognitive model may be \emph{lossy}
\citep{futrell2020lossy}, reflecting constraints such as limited working memory
\citep{gibson1998linguistic,Gibson2000TheDL} that restrict how much prior
context is effectively encoded; in the limit, $\phuman$ reduces to unigram
probabilities.\looseness=-1
\section{Surprisal Theory}
\label{sec:surp}

\paragraph{Eye tracking and reading data.}

To study sentence processing during reading, several corpora have been developed in which many participants read the same texts and eye-tracking data are collected while reading \citep{kennedy2003dundee, siegelman2022expanding}. 
Eye-tracking studies record participants' gaze continuously, allowing fixations of varying duration to be mapped directly onto the text \citep{rayner1998eye}. Although some work has sought to model these fine-grained fixations \citep{re-etal-2025-spatio}, the fixations are generally aggregated to yield a measure of reading time for some unit of interest. The choice of aggregation reflects assumptions about which stage of processing is being targeted \citep{rayner1998eye,frank2013reading, berzak2023eye}: \emph{first fixation duration} (the duration of the reader's first fixation on a unit) is thought to reflect early lexical processing \citep{clifton-2007-eye}, while \emph{gaze duration} (cumulative fixation time before first leaving a unit) and \emph{total fixation duration} (cumulative fixation time including regressions back to the unit) capture progressively later integration processes \citep{liversedge-etal-1998eye}. \looseness=-1

\paragraph{Contextual surprisal.}

Surprisal theory states that processing difficulty is related to a unit's negative log probability in a given context \citep{haleprobabilistcearly, levy2008expectation, shain2024large}. Psycholinguistic studies test this theory by relating surprisal to
behavioral or neural measures of processing effort such as fixation duration from eye-tracking data \citep{wilcox-etal-2023-testing} and event-related potentials (ERPs) from electroencephalography \citep{frank-etal-2013-word}. LMs play a central role in these studies because they provide estimates of contextual probabilities. Given $\WordLM$, the contextual surprisal of a unit $\vocabitem_t$ in context $\vocabitemSeq_{<t}$ is defined as
\begin{equation}
    \label{eq:con_surp}
    \scon\!\left(\vocabitem_t, \vocabitemSeq_{<t}\right) = -\log \PrefixWordLM\!\left(\vocabitem_t \mid \vocabitemSeq_{<t}\right).
\end{equation}

\paragraph{Unigram surprisal.}
Psycholinguistic studies indicate that units with high unigram frequency are more easily accessible in the mental lexicon \citep{hare1995frequency, bybee1998emergent}.
This is typically measured in log scale, using \defn{unigram surprisal} 
\begin{equation}
    \label{eq:uni_surp}
    \suni\!\left(\vocabitem\right) = -\log \freqNorm\!\left(\vocabitem\right),
\end{equation}
where $\vocabitem \in \vocab$ is the unit of interest. Both surprisal quantities are thus functionals of the single distribution $\LM$ (given the unit parser): contextual surprisal of the conditional structure it induces through $\PrefixWordLM$, and unigram surprisal of the marginal it induces through $\freqNorm$. The consistent distribution principle (\cref{sec:principles}) demands that both be estimated from the same $\LM$, so that this relationship also holds for the estimates.
Given that it has been shown to affect ease of processing,
unigram surprisal is regularly included as a baseline predictor of reading times; however, some studies have focused specifically on its effect on reading \citep{shain-2019-large,shain2024word,opedal-etal-2024-role}.

\section{Principles of Frequency Estimation}
\label{sec:principles}

Having discussed which distributions can be modeled to study language and having defined unigram and contextual surprisal, we can now formalize three guiding principles for estimating unigram surprisal and discuss relevant literature around them. The principles are summarized in \cref{fig:intro-principles}.

\paragraph{\pone.}
The first principle adopts an empirical conception of unigram frequency. Under this conception, unigram frequency is treated as a real population quantity: the relative prevalence of a unit in the underlying distribution that generated the corpus. The observed strings are then viewed as samples from that distribution, and estimation amounts to recovering this empirical truth from data. This perspective has two advantages. First, it ties the target quantity to observable language use rather than to an unobserved latent construct. Second, it admits a simple and interpretable estimator: under independent and identically distributed (i.i.d.) samples,
unigram probabilities are estimated by normalized token counts, which is equivalent to the maximum-likelihood estimator for a multinomial model; see \cref{app:mle_est}. As we show in \cref{prop:plugin_bias}, because $-\log$ is strictly convex, an unbiased estimator of unigram probability does not yield an unbiased estimator of unigram surprisal. Plug-in estimators of related information-theoretic quantities (i.e., entropy) are known to be biased for the same reason \citep{miller1955,arora-etal-2022-estimating}. Even for an unbiased unigram probability estimator, the estimated unigram surprisal is biased upward. The bias in surprisal disappears only for a zero-variance unigram probability estimator, motivating lower variance frequency estimators. \looseness=-1

\paragraph{\ptwo.}
The second principle adopts a cognitive conception of unigram frequency. Under this conception, the target is not an empirical population quantity directly given by attested token counts, but a latent property of the expectations in the human mind, as discussed with respect to the cognitive view of the LM (\cref{sec:which-lm}). 
The goal is not to simply recover the unigram distribution of the corpus-generating process, but to approximate the distribution that best reflects human baseline expectations over units.
In the past, this has been operationalized as the extent to which frequency values are predictive of human reaction times in lexical decision tasks. This criterion has been studied as an alternative way to validate frequency resources, rather than factors like corpus size \citep{keuleers2010subtlex, brysbaert2011word}. This has led to findings that texts from the Internet \citep{balota2004visual} or movie and TV subtitles \citep{new2007use} seem to yield more cognitively plausible frequency values than those from edited publications. Importantly, this principle may be at odds with the best estimate principle. Larger LMs, despite achieving lower perplexity, have been found to yield contextual surprisal estimates that are less predictive of processing effort than those of smaller models \citep{oh_comparison_2022, oh-schuler-2023-surprisal}.
One explanation for this is that larger models assign probabilities to very low-frequency units more accurately, but in ways that are less cognitively plausible \citep{oh-etal-2024-frequency}. Similarly, \citet{opedal-etal-2024-role} argue that medium-sized models may yield contextual surprisal estimates closer to unigram frequency because they are less effective at predicting rare units in context, better matching the balance of contextual and frequency-based information in human processing.
The same issue arises for unigram frequency itself: the statistically best estimate need not be the cognitively most plausible one. Although this question has been studied extensively for contextual surprisal, comparatively little is known about how modeling choices affect the cognitive plausibility of unigram frequency estimates.
\looseness=-1

\paragraph{\pthree.}
The third principle is that contextual and unigram surprisal should be derived from the same distribution. Although the literature often
estimates them from different sources, 
the two quantities are not independent: both are functionals of a single distribution over linguistic sequences. Contextual surprisal reflects its conditional structure, while unigram surprisal reflects the marginal distribution it induces over units. Under a shared distribution, this relationship holds by construction. Under heterogeneous estimation, it does not: the two predictors then summarize different probabilistic models rather than a single generative system. Our unified estimation framework restores this coherence by deriving both quantities from the same LM. The consistent distribution principle has been observed before, though it has primarily been addressed by changing the corpora used to calculate unigram frequency. For example, \citet{shain2024word} calculate contextual surprisal with GPT-2. To calculate unigram surprisal,  \citeauthor{shain2024word} use a corpus that was specifically chosen to approximate the model's training distribution in an effort to minimize effects from this mismatch. Likewise, in their assessment of surprisal as a measure of metaphor novelty, \citet{momen-zarriess-2026-frequency} compare unigram surprisals from \texttt{wordfreq} \citep{robyn_speer_2022} to those calculated over the training data of the LMs they use to obtain contextual surprisal.
\looseness=-1

\section{From Autoregressive Language Models to Unigram Probabilities}
\label{sec:estimation}

We now consider how to operationalize the consistent distribution principle, taking an autoregressive language model as the shared distribution and studying the problem of estimating the unigram probabilities it induces. Since exact computation of \cref{eq:freq02} is infeasible, we instead consider two different estimators of this quantity. Specifically, we define estimators for words $\vocabitem \in \vocab$, which may have several valid tokenizations under a language model's token alphabet $\Sigma$.\footnote{Even with a deterministic encoder, LMs can assign mass to non-canonical tokenizations \citep{vieira2025from}.}

\subsection{A Naive Monte Carlo Estimator}

A straightforward way to estimate \cref{eq:freq02} is by naive Monte Carlo (NMC) sampling. 
Here, we independently sample $\N$ strings from $\LM$, i.e., $\X^{\!\left(1\right)}, \ldots, \X^{\!\left(\N\right)} \stackrel{\text{i.i.d.}}{\sim} \LM$, and estimate $\freqNorm\!\left(\vocabitem\right)$ by the empirical frequency of $\vocabitem$ across all word occurrences in the samples:
\begin{equation}
    \label{eq:mc_est}
    \mcest\!\left(\vocabitem\right) =
    \frac{\sum_{n=1}^\N \counter\!\left(\X^{\!\left(n\right)},\vocabitem\right)}
         {\sum_{n=1}^\N \sum_{\vocabitem' \in \vocab} \counter\!\left(\X^{\!\left(n\right)},\vocabitem'\right)}.
\end{equation}

\begin{proposition}
\label{prop:mc_consistency}
Assume that $\E_{\X \sim \LM}\!\left[\lvert \X \rvert\right] < \infty$ and that $\sum_{\vocabitem' \in \vocab}\mu\!\left(\vocabitem'\right) > 0$.
Then, for every fixed \(\vocabitem \in \vocab\), the Monte Carlo estimator
\(\mcest_{\N}\!\left(\vocabitem\right)\) is strongly consistent and asymptotically unbiased for
\(\freqNorm\!\left(\vocabitem\right)\). That is,
\begin{equation}
    \mcest_{\N}\!\left(\vocabitem\right)
    \xrightarrow{a.s.}
    \freqNorm\!\left(\vocabitem\right)
    \qquad \text{as } \N \to \infty,
\end{equation}
and
\begin{equation}
    \lim_{\N \to \infty}
    \E\!\left[\mcest_{\N}(\vocabitem)\right]
    =
    \freqNorm\!\left(\vocabitem\right).
\end{equation}
\end{proposition}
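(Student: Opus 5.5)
The plan is to write the estimator as a ratio of two sample means and apply the strong law of large numbers to numerator and denominator separately. Define, for each $n$, $A_n \defeq \counter(\X^{(n)},\vocabitem)$ and $B_n \defeq \sum_{\vocabitem'\in\vocab}\counter(\X^{(n)},\vocabitem') = N_{\vocab}(\X^{(n)})$. Dividing numerator and denominator of \cref{eq:mc_est} by $\N$ gives
\begin{equation*}
\mcest_{\N}(\vocabitem) = \frac{\bar A_{\N}}{\bar B_{\N}}, \qquad \bar A_{\N} \defeq \frac{1}{\N}\sum_{n=1}^{\N} A_n,\quad \bar B_{\N} \defeq \frac{1}{\N}\sum_{n=1}^{\N} B_n .
\end{equation*}
We adopt the convention that the estimator equals $0$ (or any fixed value in $[0,1]$) when the denominator vanishes.

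The first step is integrability. Since the positions of $\pi(\wordseq)$ are realized by disjoint nonempty token spans, $|\pi(\wordseq)|\le|\wordseq|$. Hence
\begin{equation*}
0\le A_n\le B_n = |\pi(\X^{(n)})| \le |\X^{(n)}|,
\end{equation*}
so both $A_n$ and $B_n$ have finite expectation by the assumption $\E[|\X|]<\infty$. Their means are $\mu(\vocabitem)$ and, by the Remark, $\sum_{\vocabitem'}\mu(\vocabitem')$.

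Because the $\X^{(n)}$ are i.i.d., the pairs $(A_n,B_n)$ are i.i.d. Kolmogorov's strong law then gives $\bar A_{\N}\to\mu(\vocabitem)$ and $\bar B_{\N}\to\sum_{\vocabitem'}\mu(\vocabitem')$ almost surely. The limit of $\bar B_{\N}$ is strictly positive by assumption, so almost surely $\bar B_{\N}>0$ for all sufficiently large $\N$, and the zero-denominator convention eventually becomes irrelevant. The map $(a,b)\mapsto a/b$ is continuous on $\{b>0\}$, so the continuous mapping theorem yields $\mcest_{\N}(\vocabitem)\to\freqNorm(\vocabitem)$ almost surely, by \cref{eq:freq02}.

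For asymptotic unbiasedness, I would use boundedness rather than moment computations. Since $0\le A_n\le B_n$, the estimator takes values in $[0,1]$ for every $\N$, and the zero-denominator convention preserves this. Almost sure convergence together with the constant dominating function $1$ lets the dominated (bounded) convergence theorem give $\E[\mcest_{\N}(\vocabitem)]\to\freqNorm(\vocabitem)$.

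The main subtlety, more than a true obstacle, is that a ratio estimator is not unbiased for finite $\N$, and $\E[\bar A/\bar B]$ cannot be handled by linearity. The bound $\mcest\in[0,1]$ is what lets us bypass this entirely. I would also double-check that $|\pi(\wordseq)|\le|\wordseq|$ is exactly what the stated span assumption provides, since it is the only place the finite-expected-length hypothesis enters.
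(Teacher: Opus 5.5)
Your proposal is correct and follows the paper's proof essentially step for step: the ratio-of-sample-means form with a zero-denominator convention, integrability from $0 \le \counter(\X,\vocabitem) \le N_{\vocab}(\X) \le \lvert \X \rvert$, the strong law on both coordinates with the continuous mapping theorem at a point of positive denominator, and bounded convergence using $\mcest_{\N}(\vocabitem) \in [0,1]$. The point you flagged is also settled: the paper explicitly derives $\lvert\pi(\wordseq)\rvert \le \lvert\wordseq\rvert$ from its assumption that words are realized by disjoint, nonempty token spans.
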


\subsection{A Marginalized Monte Carlo Estimator}
\label{sec:rb}
The naive MC estimator in \cref{eq:mc_est} can have high variance, since $\X$ ranges over a very large (infinite) sample space.\footnote{See, e.g., \citet{giulianelli-etal-2024-generalized}, who report high variance when using Monte Carlo sampling to estimate sequence-level entropy in psycholinguistic modeling. \citet{clark-etal-2025-well} instead advocate for estimating entropy at the word level.} This has negative implications for the efficiency of the frequency estimator, as well as the unigram surprisal values which we derive from it. As we show in \cref{prop:plugin_bias}, unigram surprisal values are positively biased in expectation unless the estimator has zero variance, further motivating a lower-variance frequency estimator.\looseness=-1

To reduce this variance, we propose a \defn{marginalized Monte Carlo} (MMC) estimator. The key idea is to retain the sampled prefixes of the NMC estimator, while marginalizing over the next token at each prefix. Rather than treating a word as present or absent in the sampled next token, we compute the conditional expected contribution of that next token to the word count. Suppose a sampled sequence has reached a prefix $\wordseq$. NMC observes a single sampled continuation and therefore records either an occurrence of $\vocabitem$ at that position or no occurrence. The LM, however, assigns a probability to \emph{every} next token, as well as to the further tokens needed to complete an occurrence of $\vocabitem$; MMC uses all of this information.\looseness=-1

\paragraph{Canonical and residual occurrences.}
An occurrence of $\vocabitem$ may be realized by different token strings: its realizing span (\cref{ap:mmc_noncanon}), read as a standalone string, is an element of $\vocabmap\!\left(\vocabitem\right)$ under the parser of \cref{ap:boundaries}. We split the count into the part realized by the canonical tokenization of $\vocabitem$, and the part realized in any other way:
\begin{equation}
    \label{eq:can_res}
    \counter\left(\wordseq,\vocabitem\right) = \canon\left(\wordseq,\vocabitem\right) + \resid\left(\wordseq,\vocabitem\right).
\end{equation}
The canonical part can be marginalized with the LM's next-token distributions, whereas the residual part is only read off a sampled string once that string is complete.\looseness=-1

\paragraph{Marginalizing at each prefix.}
We use $Z^{\mathrm{can}}_{\vocabitem,t}$ to refer to a canonical occurrence of $\vocabitem$ beginning at position $t$ (formally, \cref{ap:mmc_noncanon}), and define
\begin{equation}
    \label{eq:h_def}
    h_{\vocabitem}\left(\word\mid\wordseq\right)
    \defeq
    \E\left[Z^{\mathrm{can}}_{\vocabitem,t}\mid \X_{<t}=\wordseq,\, \X_t=\word\right],
\end{equation}
the probability that a canonical occurrence begun at position $t$ is completed---including its right-boundary condition (\cref{ap:boundaries})---given that the next token is $\word$. Averaging over the next-token distribution, $\sum_{\word\in\alphabet}\prefixLM\!\left(\word\mid\wordseq\right)h_{\vocabitem}\left(\word\mid\wordseq\right)$ is the expected canonical contribution of the position following $\wordseq$.\looseness=-1

Marginalizing the canonical part alone would discard residual occurrences. Writing $H_{\vocabitem}\left(\wordseq\right)\defeq\sum_{s=1}^{\lvert\wordseq\rvert}h_{\vocabitem}\!\left(\word_s\mid\wordseq_{<s}\right)$ for the marginalized canonical contribution accumulated along $\wordseq$, we define a per-step contribution that charges whatever is left over to the terminal $\eos$ position:
\begin{equation}
    \label{eq:d_contrib}
    d_{\vocabitem}\left(\word\mid\wordseq\right)
    \defeq
    \begin{cases}
    h_{\vocabitem}\left(\word\mid\wordseq\right), & \word\in\alphabet,\\[4pt]
    \counter\left(\wordseq,\vocabitem\right)-H_{\vocabitem}\left(\wordseq\right), & \word=\eos.
    \end{cases}
\end{equation}
By construction, this decomposition is exact pathwise: for every completed string $\wordseq$,
\begin{equation}
    \label{eq:pathwise}
\counter\!\left(\wordseq,\vocabitem\right)\!=\!\sum_{t=1}^{\lvert\wordseq\rvert}\!d_{\vocabitem}\left(\word_t\!\mid\!\wordseq_{<t}\right)\!+\!d_{\vocabitem}\left(\eos\!\mid\!\wordseq\right).
\end{equation}
The $\eos$ term therefore absorbs both the residual, non-canonical occurrences and the gap between the marginalized and the realized canonical counts.\looseness=-1

\paragraph{The MMC estimator.}
Taking the conditional expectation of $d_{\vocabitem}$ at every prefix visited by a sampled string $\X$ gives the per-string count estimate
\begin{equation}
\label{eq:mmc_noncanon}
\begin{aligned}
&\mmccount\left(\X,\vocabitem\right)
\defeq \\
&\quad\sum_{t=1}^{\lvert\X\rvert+1}
\E_{A\sim \prefixLM\left(\cdot\mid \X_{<t}\right)}
\left[
d_{\vocabitem}\left(A\mid \X_{<t}\right)
\right],
\end{aligned}
\end{equation}
which we show to be unbiased for the expected count of \cref{eq:freq01} in \cref{ap:mmc_noncanon}. Normalizing exactly as in \cref{eq:mc_est} yields our estimator of \cref{eq:freq02}:
\begin{equation}
    \label{eq:rb_est_02}
    \rbest\left(\vocabitem\right) =
    \frac{\sum_{n=1}^\N \mmccount\left(\X^{\left(n\right)},\vocabitem\right)}
         {\sum_{n=1}^\N \sum_{\vocabitem' \in \vocab} \counter\left(\X^{\left(n\right)},\vocabitem'\right)},
\end{equation}
where $\X^{\!\left(1\right)}, \ldots, \X^{\!\left(\N\right)} \stackrel{\text{i.i.d.}}{\sim} \LM$. 
The denominator is the same expected-word-count estimate used by NMC in \cref{eq:mc_est}.\looseness=-1

\paragraph{Approximate variant.}
Given the computational cost of the MMC estimator (\cref{ap:mmc_noncanon}) and the relatively small probability mass that LMs assign to non-canonical tokenizations \citep{vieira2025from}, we also experiment with an approximation that counts only canonical occurrences, i.e., replaces $\counter$ in \cref{eq:can_res} by $\canon$, so that $\resid\left(\wordseq,\vocabitem\right) = 0$. The $\eos$ term of the expectation-equivalent contribution $\widetilde d_{\vocabitem}$ of \cref{ap:mmc_noncanon} then vanishes, and the corresponding estimator reduces to the canonical-only count estimate $\mmccountcan\left(\X,\vocabitem\right) \defeq \sum_{t=1}^{\lvert\X\rvert+1}\E_{A\sim \prefixLM\left(\cdot\mid \X_{<t}\right)}\left[h_{\vocabitem}\!\left(A\mid \X_{<t}\right)\right]$ (with $h_{\vocabitem}\left(\eos \mid \cdot\right) \defeq 0$), which is unbiased for the canonical share $\E_{\X \sim \LM}\!\left[\canon\left(\X, \vocabitem\right)\right]$ of \cref{eq:freq01}, and so agrees with \cref{eq:mmc_noncanon} in expectation exactly when the residual mass vanishes. Normalizing as before gives
\begin{equation}
    \label{eq:mmc_approx}
    \rbestapx\left(\vocabitem\right)\!=\!
    \frac{\sum_{n=1}^\N \mmccountcan\!\left(\X^{\left(n\right)},\vocabitem\right)}
         {\sum_{n=1}^\N \sum_{\vocabitem' \in \vocab} \counter\!\left(\X^{\left(n\right)},\vocabitem'\right)}.
\end{equation}
The approximation above does not require taking non-canonical tokenizations in the sampled strings into account, meaning that its numerator underestimates the expected count $\mu\left(\vocabitem\right)$ whenever residual mass is nonzero. %
We study the downstream impact of this in \cref{sec:exp}.

\section{Experiments}
\label{sec:exp}

\citet{opedal-etal-2024-role} model reading times of orthographic words in eleven 
languages using contextual surprisal from mGPT \citep{shliazhko-etal-2024-mgpt} and 
unigram surprisal from the \texttt{wordfreq} package \citep{robyn_speer_2022}, with 
data from the Multilingual Eye-movement Corpus (MECO; \citeauthor{siegelman2022expanding}, 
\citeyear{siegelman2022expanding}). 
We extend their analyses, retaining their 
contextual surprisal values but replacing unigram surprisal estimates obtained from \texttt{wordfreq} with those estimated from the same distribution as the contextual surprisal using our NMC and MMC estimators. 
Strings for estimation are obtained by sampling from mGPT with a maximum length of $50$ for computational considerations, and estimates are converted to unigram surprisals by taking the negative log of the estimated unigram probabilities. 

\citet{opedal-etal-2024-role} compare the predictive power of unigram surprisal, contextual 
surprisal, and pointwise mutual information (PMI). PMI \citep{fano1961transmission} measures 
the association between a word $\vocabitem_t$ and its context $\vocabitemSeq_{<t}$, and is defined as:
\looseness=-1
\begin{equation}
    \label{eq:pmi}
    \text{PMI}\!\left(\vocabitem_t, \vocabitemSeq_{<t}\right) = \suni\!\left(\vocabitem_t\right)-\scon\!\left(\vocabitem_t, \vocabitemSeq_{<t}\right).
\end{equation}\looseness=-1

\noindent Wherever 
PMI or unigram surprisal appear in our models, we report results using both the original 
\texttt{wordfreq} estimates \citep{robyn_speer_2022} and each of our approaches to  estimating unigram surprisal from mGPT. We model reading times using generalized additive mixed models (GAMMs), which differ from linear models by allowing predictors to enter through smooth, nonparametric functions rather than fixed linear coefficients. Our dependent variable is gaze duration. We fit five GAMMs with smooth terms for
(i) unigram surprisal, (ii) contextual surprisal, (iii) PMI,
(iv) contextual surprisal and unigram surprisal, and
(v) PMI and unigram surprisal, each with a single spillover term per predictor and random effect smooths for participant and trial. 
Model fit is assessed via delta log-likelihood ($\dll$), 
defined as the difference in per-unit log-likelihood between a candidate model 
and a baseline with only random effects.
Predictions are obtained via 10-fold cross-validation;
positive values indicate that the candidate model assigns higher likelihood to the observed reading times than this baseline. To assess whether the MMC estimators 
yield a different quality of fit than corpus-based unigram surprisals, we compare 
$\dll$ values between the two frequency conditions at the unit level 
using paired permutation tests, with $p$-values 
corrected for multiple comparisons using the Holm--Bonferroni procedure. We discuss MECO and the covered languages, preprocessing, model specification, and evaluation details in  \cref{sec:ap_data}.

\begin{figure*}[t]
    \centering 
    \includegraphics[width=\linewidth]{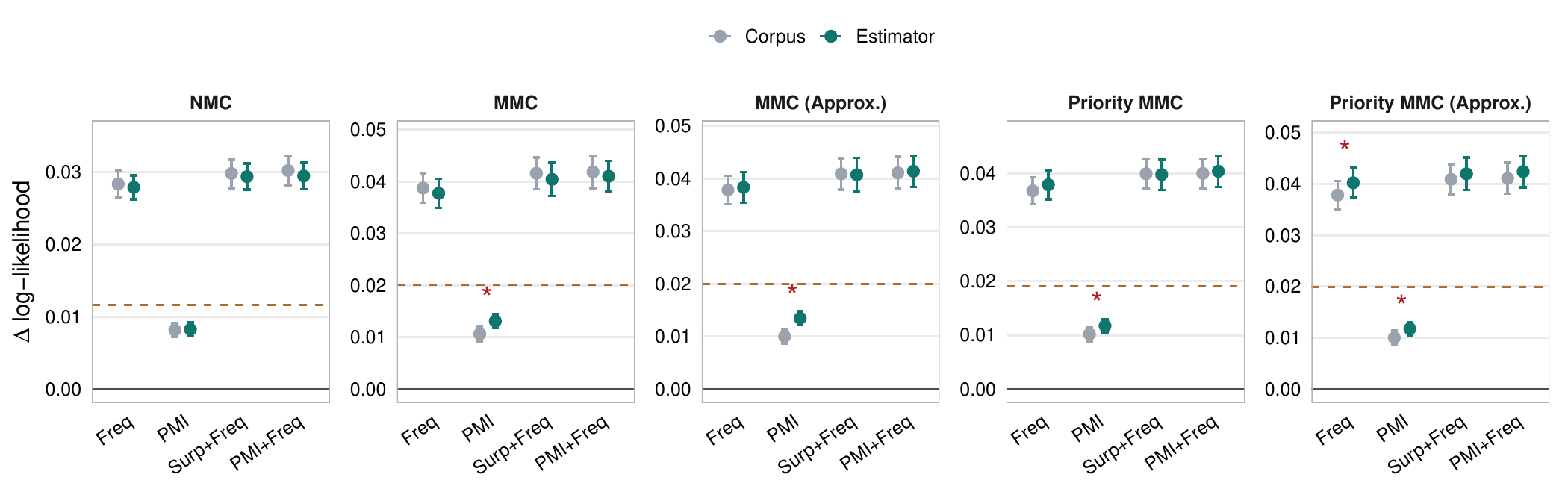}
    \caption{Comparison of $\dll$ for various GAMMs of orthographic word gaze duration fit with unigram surprisals obtained with the \texttt{wordfreq} package \citep{robyn_speer_2022} or with one of our estimators using samples from mGPT. Values are averaged across languages. Dashed lines represent the $\dll$ of adding contextual surprisal relative to each facet's baseline for reference. Significance tests compare estimated unigram surprisal against the corpus-estimated values via paired permutation test on item-level $\dll$ differences, with stars marking corrected significance at an $\alpha$ level of 0.05. Error bars represent 95\% confidence intervals.}
    \label{fig:rt_GAMM}
\end{figure*} 

\section{Results}
\label{sec:results}

\subsection{Marginalizing Reduces Variance}
To study empirically whether the variance of the MMC estimator is lower than that of the naive Monte Carlo estimator in \cref{eq:mc_est}, we use both to compute estimates of unigram frequency for each unigram appearing in the English portion of MECO \citep{siegelman2022expanding}. Given $\N=500$ samples, we assess the variance of an estimator $\widehat{\mu}$ via bootstrapping at subsample sizes
$\subsampsize \in \{100, 200, 300, 400, 500\}$, drawing $\bootsampsize=1000$ resamples with replacement and computing the estimator on each resample. The variance is then estimated as follows:\looseness=-1
\begin{equation}
    \label{eq:var}
    \begin{split}
    &\ \Var\!\left(\widehat{\mu}\right) = \E\!\left[(\widehat{\mu}-\E\!\left[\widehat{\mu}\right])^2\right]
    \\& = \frac{1}{\bootsampsize-1}\sum_{k=1}^\bootsampsize\!\left(\widehat{\mu_k}-\overline{\widehat\mu}\right)^2,
    \end{split}
\end{equation}
where $\overline{\widehat\mu}\defeq\frac{1}{\bootsampsize}\sum_{k=1}^{\bootsampsize}\widehat\mu_k$. Results are in \cref{tab:var} (see \cref{fig:variance_comp} for visualizations). The marginalized estimator (\cref{eq:rb_est_02}) exhibits consistently lower variance than the naive estimator (\cref{eq:mc_est}) across all subsample sizes and unigrams. Indeed, the ratio of the two variances generally grows with larger subsamples.
This confirms that marginalization is an empirically efficient method to estimate unigram probabilities. 

\begin{table}[]
\centering
\small
\begin{tabular}{lccc}
\toprule
{$J$} & {MMC variance} & {NMC variance} & {Ratio} \\
\midrule
  100 & 1.685e-08 & 1.736e-07 & 10.30 \\
  200 & 8.508e-09 & 9.689e-08 & 11.39 \\
  300 & 5.648e-09 & 6.769e-08 & 11.98 \\
  400 & 4.044e-09 & 5.448e-08 & 13.47 \\
  500 & 3.307e-09 & 4.281e-08 & 12.94 \\
\bottomrule
\end{tabular}
\caption{Mean bootstrap variance of the Marginalized (MMC) and
Naive (NMC) Monte Carlo unigram frequency estimators across subsample sizes~$J$. Ratio = NMC\,/\,MMC.}
\label{tab:var}
\end{table}

We also investigate the coverage of the naive Monte Carlo estimate, i.e., for how many unigrams in MECO it successfully obtains estimates. Obtaining an estimate with the Naive Monte Carlo estimator requires the unigram to occur in the sample pool, which is not guaranteed. Across a range of mGPT sample sizes from $N=10^2$ to $N=10^7$, we find that NMC never successfully obtains estimates for 100\% of the types in the MECO corpus for any of the 11 languages tested here, and even obtains less than 50\% coverage for Finnish types in sample pools of $10^7$ strings (see \cref{app:ap_var}).\looseness=-1

Importantly, our coverage experiment indicates that sampled text from mGPT is skewed toward Hebrew, Greek, and Russian. Thus, while the MMC estimator does not have the same coverage problems as the NMC estimator, the sampled contexts used for estimates may not be informative (i.e., because they are from different languages). To better adapt our estimator for use with multilingual language models, we introduce priority-MMC, which draws samples from a large pool with language-specific strata (see \cref{sec:ap_pps} for details).\looseness=-1

\subsection{Downstream Impact of Marginalizing}

We now focus on models of gaze duration fit with various unigram estimates: corpus-based estimates from \texttt{wordfreq} \citep{robyn_speer_2022}, NMC estimates, MMC estimates and their canonical-only approximations (MMC and MMC Approx.), and priority-MMC estimates and their canonical-only approximations (Priority MMC and Priority MMC Approx.). We present bootstrapped distributions of item-level $\dll$ for GAMMs modeling gaze duration in \cref{fig:rt_GAMM}. Across all MMC variants, we consistently observe a significant increase in $\dll$ for GAMMs fit with PMI calculated with MMC unigram surprisals, relative to GAMMs fit with PMI calculated with \texttt{wordfreq} unigram surprisals. As in prior work \citep{opedal-etal-2024-role, heilbron2023lexical}, we find that unigram surprisal is a stronger predictor of reading times than contextual surprisal; our results confirm this finding in an environment where both values come from the same underlying distribution. \looseness=-1
We note that because of relatively poor coverage (even at high sample sizes) for rarer words, more samples are excluded from the NMC models, meaning the results are likely less robust than those from the various MMC estimators. Though the MMC and priority MMC estimators properly account for non-canonical tokenizations, our findings indicate that approximations over only canonical samples yield similar results with higher coverage and less computational cost. The results highlight that principled approaches to estimating unigram surprisal have a downstream impact.\looseness=-1

\section{Conclusion}

This paper highlights an understudied matter in surprisal theory literature---while much attention has been given to how we estimate contextual surprisal, little research has focused on how we estimate unigram surprisal. We propose three principles to guide this estimation and introduce a Monte Carlo estimator of unigram surprisal from a language model, along with several marginalized variants. By deriving both surprisal quantities from the same model, differences in predictive power can be attributed to modeling decisions rather than mismatched estimation procedures.
Our experiments show that the choice of unigram surprisal estimator has empirical consequences, and we provide further support that contextual surprisal is a weaker predictor of reading times than unigram surprisal in modeling reading time. We hope this encourages future work on what modeling decisions result in cognitively plausible unigram surprisal estimations and a more careful consideration of the role of frequency in computational psycholinguistics.\looseness=-1

\section*{Limitations} Though we argue that our principles for estimating unigram surprisal are applicable to computational psycholinguistics as a whole, we limit our analyses to sentence processing. We find that the procedure for estimating unigram surprisal has an empirical impact when it comes to modeling reading times, but the claim warrants further investigation in other lines of computational psycholinguistics research, e.g., in modeling speech segmentation during language acquisition or acceptability judgment experiments.

In general, estimating unigram surprisal from a language model is computationally expensive compared to using lexical databases or corpus counts, particularly for the marginalized estimators we present. Because of their lower variance, the marginalized variants of our estimator also address this limitation of our method to some extent: relatively few samples are required to obtain stable estimates compared to the Naive Monte Carlo estimator. Still, computational resources limited our sample size and the maximum string length of the strings sampled from mGPT. 

We also study a limited scope of units. Our implementation of the unit parser matches whitespace-initial token spans (\cref{ap:boundaries}), which is adequate for the whitespace-delimited orthographies studied here but not in general. Whether the impact of the specific estimator used varies with units of interest (for example, characters or the contextually defined words studied in \citet{kiegeland-etal-2026-proper}) remains an open question.

\section*{Ethical Concerns}

This study uses eye-tracking data from human participants. Because the dataset has already been published \citep{siegelman2022expanding}, we foresee no new ethical concerns from the contributions made in this paper.

\section*{Acknowledgments}

ZH was supported by the Swiss National Science
Foundation (Grant No. 10003607).

\bibliography{latex/custom}

\appendix

\newpage

\onecolumn

\startappendixtoc

\appendixtableofcontents
\clearpage

\section{Estimator description}
\label{ap:mmc_noncanon}

\begin{itemize}
    \item \textbf{Word occurrences.} Occurrences of a word $\vocabitem$ and the split of $\counter$ into canonical and residual realizations are as in \cref{eq:can_res}, writing $C_{\vocabitem}(X) \defeq \counter(X, \vocabitem)$ and likewise for the canonical and residual parts: an occurrence at unit position $s$ of $\pi(X)$ is \emph{canonical} when the token span realizing it equals the canonical tokenization of $\vocabitem$, and \emph{residual} otherwise. We write $Z^{\mathrm{can}}_{\vocabitem,t}$ for the indicator that a canonical occurrence begins at token position $t$.
\item \textbf{Marginalize the canonical realization.} A canonical occurrence of $\vocabitem$ starting at position $t$ requires the next $k$ tokens to match its canonical tokenization, followed by a valid right boundary. We therefore define
\[
h_{\vocabitem}(a\mid x)
\defeq
\mathbb E\!\left[
Z^{\mathrm{can}}_{\vocabitem,t}
\mid
X_{<t}=x,\,
X_t=a
\right].
\]
We can then define the probability that the canonical realization of $\vocabitem$ begins at prefix $x$ and is completed with a valid right boundary as
\[
\sum_{a\in\Sigma}
\prefixLM(a\mid x)\,h_{\vocabitem}(a\mid x).
\]
Specifically, computing this quantity at a prefix $x$ requires $O((k+1)C)$ computation, where $k$ is the number of tokens in the canonical realization of $\vocabitem$ and $C$ is the cost of a single LM forward pass.
\item \textbf{Recover the exact count at EOS.} Define the per-step contribution
\[
d_{\vocabitem}(a\mid x)
=
\begin{cases}
h_{\vocabitem}(a\mid x), & a\in\Sigma,\\[5pt]
C_{\vocabitem}^{\mathrm{can}}(x)+C_{\vocabitem}^{\mathrm{res}}(x)
-\displaystyle\sum_{s=1}^{|x|} h_{\vocabitem}(x_s\mid x_{<s}),
& a=\eos.
\end{cases}
\]
Then, for every completed string $X$,
\[
\begin{aligned}
C_{\vocabitem}(X)
&=
C_{\vocabitem}^{\mathrm{can}}(X)+C_{\vocabitem}^{\mathrm{res}}(X)\\
&=
\sum_{t=1}^{|X|} h_{\vocabitem}(X_t\mid X_{<t})
+
\left[
C_{\vocabitem}^{\mathrm{can}}(X)+C_{\vocabitem}^{\mathrm{res}}(X)
-
\sum_{t=1}^{|X|} h_{\vocabitem}(X_t\mid X_{<t})
\right]\\
&=
\sum_{t=1}^{|X|}
d_{\vocabitem}(X_t\mid X_{<t})
+
d_{\vocabitem}(\eos\mid X).
\end{aligned}
\]
\item \textbf{Marginalize the per-step contribution $d_{\vocabitem}$.} At each prefix $x$, take the conditional expectation of the next-step contribution:
\begin{align*}
&\mathbb{E}_{A\sim \prefixLM(\cdot\mid x)}
\left[d_{\vocabitem}(A\mid x)\right]
= \\ &\qquad
\sum_{a\in\Sigma} \prefixLM(a\mid x)\,h_{\vocabitem}(a\mid x)
+
\prefixLM(\eos\mid x)
\left[
C_{\vocabitem}^{\mathrm{can}}(x)+C_{\vocabitem}^{\mathrm{res}}(x)
-
\sum_{s=1}^{|x|}h_{\vocabitem}(x_s\mid x_{<s})
\right].
\end{align*}

\item \textbf{Define the marginalized estimator.} For a sampled string $X$, sum the conditional expected contributions over all visited prefixes:
\[
\widehat C_{\vocabitem}^{\mathrm{MMC}}(X)
=
\sum_{t=1}^{|X|+1}
\mathbb E_{A\sim \prefixLM(\cdot\mid X_{<t})}
\left[
d_{\vocabitem}(A\mid X_{<t})
\right].
\]
\item \textbf{Show unbiasedness.} Let $\tau (X) \defeq |X|+1$.
Then
{\allowdisplaybreaks
\begin{align*}
\mathbb E_{X\sim p}
\left[
\widehat C_{\vocabitem}^{\mathrm{MMC}}(X)
\right]
&=
\mathbb E_{X\sim p}
\left[
\sum_{t=1}^{\tau(X)}
\mathbb E_{A\sim \prefixLM(\cdot\mid X_{<t})}
\left[
d_{\vocabitem}(A\mid X_{<t})
\right]
\right]
\\[4pt]
\stepnote{extend sum via indicator}&=
\mathbb E_{X\sim p}
\left[
\sum_{t=1}^{\infty}
\mathbf {1}\{|X| \ge t-1\}
\mathbb E_{A\sim \prefixLM(\cdot\mid X_{<t})}
\left[
d_{\vocabitem}(A\mid X_{<t})
\right]
\right]
\\[4pt]
\stepnote{expectation as sum}&=
\sum_{\boldsymbol{x}\in\Sigma^*}
p(\boldsymbol{x})
\sum_{t=1}^{\infty}
\mathbf {1}\{|\boldsymbol{x}| \ge t-1\}
\mathbb E_{A\sim \prefixLM(\cdot\mid \boldsymbol{x}_{<t})}
\left[
d_{\vocabitem}(A\mid \boldsymbol{x}_{<t})
\right]
\\[4pt]
\stepnote{swap sums (Fubini)}&=
\sum_{t=1}^{\infty}
\sum_{\boldsymbol{x}\in\Sigma^*}
p(\boldsymbol{x})
\mathbf {1}\{|\boldsymbol{x}| \ge t-1\}
\mathbb E_{A\sim \prefixLM(\cdot\mid \boldsymbol{x}_{<t})}
\left[
d_{\vocabitem}(A\mid \boldsymbol{x}_{<t})
\right]
\\[4pt]
\stepnote{rewrite as expectation}&=
\sum_{t=1}^{\infty}
\mathbb E_{X\sim p}
\left[
\mathbf {1}\{|X| \ge t-1\}
\mathbb E_{A\sim \prefixLM(\cdot\mid X_{<t})}
\left[
d_{\vocabitem}(A\mid X_{<t})
\right]
\right]
\\[4pt]
\stepnote{expectation as sum}&=
\sum_{t=1}^{\infty}
\sum_{\boldsymbol{x}\in\Sigma^*}
p(\boldsymbol{x})
\,
\mathbf {1}\{|\boldsymbol{x}| \ge t-1\}
\mathbb E_{A_t\sim \prefixLM(\cdot\mid \boldsymbol{x}_{<t})}
\left[
d_{\vocabitem}(A_t\mid \boldsymbol{x}_{<t})
\right]
\\[6pt]
\stepnote{split $\boldsymbol{x}=\boldsymbol{x}_{<t}\boldsymbol z$}&=
\sum_{t=1}^{\infty}
\sum_{\boldsymbol{x}_{<t} \in \Sigma^{t-1}}
\sum_{\boldsymbol z\in\Sigma^*}
\left[
\prod_{s=1}^{t-1}
\prefixLM(x_s\mid \boldsymbol{x}_{<s})
\right]\left[
\prod_{r=1}^{|\boldsymbol z|}
\prefixLM(z_r\mid \boldsymbol{x}_{<t}\boldsymbol z_{<r})
\right]
\\
&\qquad \times \prefixLM(\eos\mid \boldsymbol{x}_{<t}\boldsymbol z) \mathbf {1}\{|\boldsymbol{x}_{<t} \boldsymbol{z}| \ge t-1\} 
\mathbb E_{A_t\sim \prefixLM(\cdot\mid \boldsymbol{x}_{<t})}
\left[
d_{\vocabitem}(A_t\mid \boldsymbol{x}_{<t})
\right]
\\[6pt]
\stepnote{$\boldsymbol z$-sum equals $1$}&=
\sum_{t=1}^{\infty}
\sum_{\boldsymbol{x}_{<t} \in \Sigma^{t-1}}
\underbrace{\left[
\prod_{s=1}^{t-1}
\prefixLM(x_s\mid \boldsymbol{x}_{<s})
\right]}_{=\, \prefixLM(\boldsymbol{x}_{<t})}\\&\qquad\qquad \times
\underbrace{
\sum_{\boldsymbol z\in\Sigma^*}
\left[
\prod_{r=1}^{|\boldsymbol z|}
\prefixLM(z_r\mid \boldsymbol{x}_{<t}\boldsymbol z_{<r})
\right]
\prefixLM(\eos\mid \boldsymbol{x}_{<t}\boldsymbol z)
}_{=\,1}
\\&\qquad\qquad  \times
 \mathbf {1}\{|\boldsymbol{x}_{<t} \boldsymbol{z}| \ge t-1\}
\mathbb E_{A_t\sim \prefixLM(\cdot\mid \boldsymbol{x}_{<t})}
\left[
d_{\vocabitem}(A_t\mid \boldsymbol{x}_{<t})
\right] \\
\stepnote{indicator equals $1$}&=
\sum_{t=1}^{\infty}
\sum_{\boldsymbol{x}_{<t}\in\Sigma^{t-1}}
\prefixLM(\boldsymbol{x}_{<t})
\, \underbrace{
\mathbf{1}\{|\boldsymbol{x}_{<t}| \ge t-1\}
}_{=\,1\;\text{since }\boldsymbol{x}_{<t}\in\Sigma^{t-1}}
\mathbb E_{A_t\sim \prefixLM(\cdot\mid \boldsymbol{x}_{<t})}
\left[
d_{\vocabitem}(A_t\mid \boldsymbol{x}_{<t})
\right]
\\[6pt]
\stepnote{expand inner expectation}&=
\sum_{t=1}^{\infty}
\sum_{\boldsymbol{x}_{<t}\in\Sigma^{t-1}}
\prefixLM(\boldsymbol{x}_{<t})
\sum_{a\in\Sigma\cup\{\eos\}}
\prefixLM(a\mid \boldsymbol{x}_{<t})
\,
d_{\vocabitem}(a\mid \boldsymbol{x}_{<t})
\\[6pt]
\stepnote{split off $\eos$ term}&=
\sum_{t=1}^{\infty}
\sum_{\boldsymbol{x}_{<t}\in\Sigma^{t-1}}
\prefixLM(\boldsymbol{x}_{<t})
\Bigl[
\sum_{a\in\Sigma}
\prefixLM(a\mid \boldsymbol{x}_{<t})
d_{\vocabitem}(a\mid \boldsymbol{x}_{<t})
\Bigr.\\&\qquad\qquad\qquad\qquad\qquad\qquad\Bigl.+
\prefixLM(\eos\mid \boldsymbol{x}_{<t})
d_{\vocabitem}(\eos\mid \boldsymbol{x}_{<t})
\Bigr]
\\[6pt]
\stepnote{chain rule of $\prefixLM$}&=
\sum_{t=1}^{\infty}
\sum_{\boldsymbol{x}_{\leq t}\in\Sigma^t}
\prefixLM(\boldsymbol{x}_{\leq t})
d_{\vocabitem}(x_t\mid \boldsymbol{x}_{<t})
\\&\qquad+
\sum_{t=1}^{\infty}
\sum_{\boldsymbol{x}_{<t}\in\Sigma^{t-1}}
\prefixLM(\boldsymbol{x}_{<t})
\prefixLM(\eos\mid \boldsymbol{x}_{<t})
d_{\vocabitem}(\eos\mid \boldsymbol{x}_{<t})
\\[6pt]
\stepnote{$\eos$ term over full strings}&=
\sum_{t=1}^{\infty}
\sum_{\boldsymbol{x}_{\leq t}\in\Sigma^t}
\prefixLM(\boldsymbol{x}_{\leq t})
d_{\vocabitem}(x_t\mid \boldsymbol{x}_{<t})
+
\sum_{\boldsymbol{x}\in\Sigma^*}
p(\boldsymbol{x})
d_{\vocabitem}(\eos\mid \boldsymbol{x})
\\[6pt]
&=
\sum_{t=1}^{\infty}
\sum_{\boldsymbol{x}_{\leq t}\in\Sigma^t}
\sum_{\boldsymbol z\in\Sigma^*}
p(\boldsymbol{x}_{\leq t}\boldsymbol z)
d_{\vocabitem}(x_t\mid \boldsymbol{x}_{<t})
+
\sum_{\boldsymbol{x}\in\Sigma^*}
p(\boldsymbol{x})
d_{\vocabitem}(\eos\mid \boldsymbol{x})
\\[0.1pt]
\stepnote{regroup by $\boldsymbol{x}$}&=
\sum_{\boldsymbol{x}\in\Sigma^*}
p(\boldsymbol{x})
\sum_{t=1}^{|\boldsymbol{x}|}
d_{\vocabitem}(x_t\mid \boldsymbol{x}_{<t})
+
\sum_{\boldsymbol{x}\in\Sigma^*}
p(\boldsymbol{x})
d_{\vocabitem}(\eos\mid \boldsymbol{x})
\\[0.1pt]
\stepnote{definition of $C_{\vocabitem}$}&=\mathbb E_{X\sim p}
\left[
\sum_{t=1}^{|X|}
d_{\vocabitem}(X_t\mid X_{<t})
+
d_{\vocabitem}(\eos\mid X)
\right] = \mathbb E_{X\sim p}
\left[
C_{\vocabitem}(X)
\right].
\end{align*}}

\item \textbf{Define an expectation-equivalent contribution.}
Because the canonical correction term has zero expectation,
\[
\mathbb E_{X\sim p}
\left[
C_{\vocabitem}^{\mathrm{can}}(X)
-
\sum_{t=1}^{|X|}
h_{\vocabitem}(X_t\mid X_{<t})
\right]
=0,
\]
we can remove it from the EOS contribution without changing the expected count. We therefore define
\[
\widetilde d_{\vocabitem}(a\mid x)
\defeq
\begin{cases}
h_{\vocabitem}(a\mid x), & a\in\Sigma,\\[5pt]
C_{\vocabitem}^{\mathrm{res}}(x), & a=\eos.
\end{cases}
\]
The corresponding cumulative quantity is
\[
\widetilde C_{\vocabitem}(X)
\defeq
\sum_{t=1}^{|X|}
\widetilde d_{\vocabitem}(X_t\mid X_{<t})
+
\widetilde d_{\vocabitem}(\eos\mid X)
=
\sum_{t=1}^{|X|}
h_{\vocabitem}(X_t\mid X_{<t})
+
C_{\vocabitem}^{\mathrm{res}}(X).
\]
Unlike $d_{\vocabitem}$, the contribution $\widetilde d_{\vocabitem}$ does not recover the exact count pathwise. However,
\[
\begin{aligned}
\mathbb E_{X\sim p}
\left[
\widetilde C_{\vocabitem}(X)
\right]
&=
\mathbb E_{X\sim p}
\left[
\sum_{t=1}^{|X|}
h_{\vocabitem}(X_t\mid X_{<t})
+
C_{\vocabitem}^{\mathrm{res}}(X)
\right]
\\
&=
\mathbb E_{X\sim p}
\left[
C_{\vocabitem}^{\mathrm{can}}(X)
+
C_{\vocabitem}^{\mathrm{res}}(X)
\right]
\\
&=
\mathbb E_{X\sim p}
\left[
C_{\vocabitem}(X)
\right].
\end{aligned}
\]
Thus, $\widetilde d_{\vocabitem}$ yields an expectation-equivalent decomposition of the count, whereas $d_{\vocabitem}$ yields a pathwise-exact decomposition.
\item \textbf{Define the expectation-equivalent marginalized estimator.}

Using $\widetilde d_{\vocabitem}$, we define the corresponding marginalized estimator as
\[
\widehat C_{\vocabitem}^{\mathrm{MMC}\text{-}\widetilde d}(X)
\defeq
\sum_{t=1}^{|X|+1}
\mathbb E_{A\sim \prefixLM(\cdot\mid X_{<t})}
\left[
\widetilde d_{\vocabitem}(A\mid X_{<t})
\right].
\]
\end{itemize}

\section{MLE of Unigram Frequency}
\label{app:mle_est}
Suppose we observe strings $X^{\!\left(1\right)}, \dots, X^{\!\left(N\right)} \in \wordStrings$ sampled from some distribution over strings. A direct estimator of unigram probability is obtained by replacing the expectations in \cref{eq:freq01,eq:freq02} with empirical averages.

First, define the empirical expected count
\begin{equation}
    \hat{\mu}_N\!\left(\vocabitem\right)
    \defeq
    \frac{1}{N}\sum_{n=1}^{N} \counter\!\left(X^{\!\left(n\right)}, \vocabitem\right).
\end{equation}
Plugging this and its sum over $\vocab$ into \cref{eq:freq02} yields the estimator
\begin{equation}
    \hat{p}_N\!\left(\vocabitem\right)
    \defeq
    \frac{\hat{\mu}_N\!\left(\vocabitem\right)}
    {\sum_{\vocabitem' \in \vocab} \hat{\mu}_N\!\left(\vocabitem'\right)}.
\end{equation}

Equivalently, $\hat{p}_N\!\left(\vocabitem\right)$ is just the relative frequency of $\vocabitem$ among all word occurrences in the sample. The corresponding plug-in estimator of unigram surprisal is
\begin{equation}
    \hat{s}_{\mathrm{uni},N}\!\left(\vocabitem\right)
    \defeq
    -\log \hat{p}_N\!\left(\vocabitem\right).
\end{equation}

This is the standard corpus-based approach to unigram estimation. It is a \emph{plug-in} estimator because it estimates the population moments appearing in the definition of unigram probability and then substitutes those estimates into the target functional. Lexical resources such as frequency tables or unigram databases can be viewed as implementations of this same basic strategy, possibly with additional preprocessing or smoothing.

\begin{proposition}
\label{prop:plugin_bias}
For an unbiased estimator of unigram probability, the expected surprisal of the frequency estimator overestimates the true unigram surprisal:
\begin{equation}
    \label{eq:jensen_bias}
    \E\!\left[-\log \hat{p}\!\left(\vocabitem\right)\right]
    \;\geq\;
    -\log \freqNorm\!\left(\vocabitem\right)
    \;=\;
    \suni\!\left(\vocabitem\right).
\end{equation}
Equality holds if and only if $\hat{p}\!\left(\vocabitem\right)$ has zero variance,
i.e., is almost surely constant.
\end{proposition}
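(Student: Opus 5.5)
This is a direct application of Jensen's inequality to the strictly convex function $-\log$ on $(0,\infty)$. Write $\hat{p} = \hat{p}(\vocabitem)$ and assume, as in the statement, that $\E[\hat{p}] = \freqNorm(\vocabitem)$.

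First, I handle the degenerate case. If $\Pr{\hat{p} = 0} > 0$, then $-\log \hat{p} = +\infty$ with positive probability. Hence $\E[-\log \hat{p}] = +\infty$ and the inequality holds trivially. Note that $\freqNorm(\vocabitem) > 0$, since $\LM$ gives every string positive mass, so the right-hand side is finite. In this case $\hat{p}$ is not almost surely constant: it is zero with positive probability while its mean is positive. So the equality clause is also consistent. It thus suffices to treat $\hat{p} > 0$ almost surely.

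Second, for $\hat{p} \in (0,\infty)$ almost surely with finite mean, Jensen's inequality for the convex function $\phi(x) = -\log x$ gives
\[
\E[-\log \hat{p}] \;\geq\; -\log \E[\hat{p}] \;=\; -\log \freqNorm(\vocabitem) \;=\; \suni(\vocabitem),
\]
using unbiasedness and \cref{eq:uni_surp}. To justify this without integrability worries, I would use the tangent-line inequality at $m = \freqNorm(\vocabitem)$:
\[
-\log x \;\geq\; -\log m - \frac{x - m}{m} \qquad \text{for all } x > 0.
\]
The right-hand side is integrable, so taking expectations is legitimate, possibly with the value $+\infty$ on the left. The linear term vanishes in expectation by unbiasedness.

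Third, for the equality characterization I rely on strict convexity. The gap $g(x) = -\log x + \log m + (x - m)/m$ is nonnegative and vanishes only at $x = m$. If equality holds, then $\E[g(\hat{p})] = 0$ with $g \geq 0$, so $g(\hat{p}) = 0$ almost surely. This means $\hat{p} = m$ almost surely, so the variance is zero. Conversely, zero variance means $\hat{p} = \E[\hat{p}] = m$ almost surely, which gives equality.

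The main obstacle is not the inequality itself but measure-theoretic care. One issue is the possibility that $\hat{p} = 0$, which happens for NMC when the word is unsampled. The other is that $-\log \hat{p}$ may not be integrable. The tangent-line formulation handles both cleanly, so I would present the proof that way rather than quoting Jensen abstractly.
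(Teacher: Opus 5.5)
Your proof is correct and follows the same route as the paper: Jensen's inequality for the strictly convex function $-\log$, combined with unbiasedness $\E[\hat{p}(\vocabitem)] = \freqNorm(\vocabitem)$, with equality forcing $\hat{p}(\vocabitem)$ to be almost surely constant. Your tangent-line argument at $\freqNorm(\vocabitem)$ and your separate treatment of $\hat{p}(\vocabitem) = 0$ make explicit the integrability and extended-value issues that the paper's one-line appeal to Jensen leaves implicit.
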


\begin{proof}
The expected surprisal of the frequency estimator is
$\E\left[-\log \hat{p}\left(\vocabitem\right)\right]$. Since $-\log$ is strictly convex,
\begin{equation}
    \E\left[-\log \hat{p}\left(\vocabitem\right)\right]
    \;\overset{\textcolor{gray}{\text{Jensen}}}{\geq}\;
    -\log \E\left[\hat{p}\left(\vocabitem\right)\right]
    \;\overset{\textcolor{gray}{\text{unbiased}}}{=}\;
    -\log \freqNorm\left(\vocabitem\right)
    = \suni\left(\vocabitem\right),
\end{equation}
where the last equality uses
$\E\left[\hat{p}\left(\vocabitem\right)\right] = \freqNorm\left(\vocabitem\right)$. Since $-\log$ is \emph{strictly} convex, the Jensen step is strict unless
$\hat{p}\left(\vocabitem\right)$ equals $\E\left[\hat{p}\left(\vocabitem\right)\right]$
almost surely. That is, unless it has zero variance, in which case the bias disappears.
\end{proof}

\section{Implementing the Unit Parser}
\label{ap:boundaries}

In practice, we do not compute $\pi$ exactly, but approximate it, and hence the count $\counter$, by matching whitespace-initial token spans. \citet{kiegeland-etal-2026-proper} argue that whitespace heuristics of this kind are not linguistically adequate in general. We adopt the heuristic as an implementation of $\pi$ for the whitespace-delimited words studied here and note this as a limitation.

In the following example, we illustrate our boundary-sensitive matching for the target word \texttt{good}. \textvisiblespace{} marks the leading whitespace carried by a token, and \textcolor{green!50!black}{green} \textvisiblespace{} symbols indicate the word boundaries relevant to the match. 

\begin{figure}[!h]
\centering
\begin{tikzpicture}[
    font=\small,
    tok/.style={draw, rounded corners=1pt, minimum height=5.5mm, inner sep=3pt,
                text height=1.8ex, text depth=0.45ex},
    ctxt/.style={draw=none, inner sep=2pt,
                text height=1.8ex, text depth=0.45ex},
    lab/.style={draw=none, inner sep=1pt, font=\scriptsize},
    every node/.style={align=center}
]

\node[lab, anchor=east] (alab) at (1.7,0) {(a) \texttt{good}};
\node[ctxt, anchor=west] (a0) at (2.0,0) {\texttt{very}};
\node[tok,  right=3mm of a0]   (a1) {\texttt{{\color{green!50!black}\textvisiblespace}go}};
\node[tok,  right=1mm of a1]   (a2) {\texttt{od}};
\node[ctxt, right=3mm of a2]   (a3) {\texttt{{\color{green!50!black}\textvisiblespace}movie}};

\draw[->] ([yshift=-4mm]a1.south) -- (a1.south) node[midway,right,lab] {$t$};

\draw[decorate,decoration={brace,amplitude=4pt}]
    ([yshift=1mm]a1.north west) -- ([yshift=1mm]a2.north east)
    node[midway,above=5pt,lab] {candidate match};

\node[lab, anchor=east] (blab) at (1.7,-1.9) {(b) \texttt{goodness}};
\node[ctxt, anchor=west] (b0) at (2.0,-1.9) {\texttt{her}};
\node[tok,  right=3mm of b0]   (b1) {\texttt{{\color{green!50!black}\textvisiblespace}go}};
\node[tok,  right=1mm of b1]   (b2) {\texttt{od}};
\node[tok,  right=1mm of b2, draw=red, text=red] (b3) {\texttt{ness}};
\node[ctxt, right=3mm of b3]   (b4) {\texttt{\textvisiblespace shows}};

\draw[->] ([yshift=-4mm]b1.south) -- (b1.south) node[midway,right,lab] {$t$};

\draw[decorate,decoration={brace,amplitude=4pt}]
    ([yshift=1mm]b1.north west) -- ([yshift=1mm]b2.north east)
    node[midway,above=5pt,lab] {candidate match};

\end{tikzpicture}
\end{figure}

In both cases the scan starts at token position $t$ on \texttt{\textvisiblespace go}, whose whitespace provides the left boundary. In (a), the span \texttt{\textvisiblespace go}\,\texttt{od} forms \texttt{good} and the whitespace of the next token \texttt{\textvisiblespace movie} provides a valid right boundary, so the occurrence is counted. In (b), the same span is followed by \textcolor{red}{\texttt{ness}}, which carries no whitespace, so the right boundary condition fails and the occurrence is not counted.

Below we enumerate the left and right boundaries the matching requires:

\begin{table}[!h]
\begin{tabular}{@{}l l p{0.52\linewidth}@{}}
\toprule
Boundary & Location & Condition (satisfied when) \\
\midrule
Left  & first token of $\vocabitem$
      & it carries a leading \textcolor{green!50!black}{\textvisiblespace}
        (word-initial marker \texttt{Ġ}), \emph{or} it begins
        the sequence \\[3pt]
Right & token following the last token of $\vocabitem$
      & it carries a leading \textcolor{green!50!black}{\textvisiblespace}
        (marker \texttt{Ġ}), \emph{or} it is \eos \\
\bottomrule
\end{tabular}
\end{table}

\section{Properties of the Naive Monte Carlo Estimator}

\newtheorem*{repproposition}{Proposition \ref{prop:mc_consistency}}
\begin{repproposition}
Assume that $\E_{\X \sim \LM}\!\left[\lvert \X \rvert\right] < \infty$ and that $\sum_{\vocabitem' \in \vocab}\mu\!\left(\vocabitem'\right) > 0$.
Then, for every fixed \(\vocabitem \in \vocab\), the Monte Carlo estimator
\(\mcest_{\N}\!\left(\vocabitem\right)\) is strongly consistent and asymptotically unbiased for
\(\freqNorm\!\left(\vocabitem\right)\). That is,
\begin{equation}
    \mcest_{\N}\!\left(\vocabitem\right)
    \xrightarrow{a.s.}
    \freqNorm\!\left(\vocabitem\right)
    \qquad \text{as } \N \to \infty,
\end{equation}
and
\begin{equation}
    \lim_{\N \to \infty}
    \E\!\left[\mcest_{\N}(\vocabitem)\right]
    =
    \freqNorm\!\left(\vocabitem\right).
\end{equation}
\end{repproposition}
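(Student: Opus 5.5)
Our plan is to write the estimator as a ratio of two sample means and apply the strong law of large numbers to numerator and denominator separately. Dividing numerator and denominator of \cref{eq:mc_est} by $\N$, we get
\[
\mcest_{\N}(\vocabitem)=\frac{\frac{1}{\N}\sum_{n=1}^{\N}\counter\!\left(\X^{(n)},\vocabitem\right)}{\frac{1}{\N}\sum_{n=1}^{\N}N_{\vocab}\!\left(\X^{(n)}\right)},
\]
using the Remark after \cref{eq:freq02}: $\sum_{\vocabitem'}\counter(\wordseq,\vocabitem')=N_{\vocab}(\wordseq)=\lvert\pi(\wordseq)\rvert$.

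The integrability needed for the strong law comes from the assumption $\E[\lvert\X\rvert]<\infty$. Because the parse positions are realized by disjoint nonempty token spans, we have $0\le\counter(\wordseq,\vocabitem)\le N_{\vocab}(\wordseq)\le\lvert\wordseq\rvert$. Hence both summands are i.i.d., nonnegative and integrable, with means $\mu(\vocabitem)$ and $\sum_{\vocabitem'}\mu(\vocabitem')$ respectively. The strong law then gives almost sure convergence of the numerator to $\mu(\vocabitem)$ and of the denominator to $\sum_{\vocabitem'}\mu(\vocabitem')$. This limit is strictly positive by hypothesis, so the map $(a,b)\mapsto a/b$ is continuous at the limit point. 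By the continuous mapping theorem, $\mcest_{\N}(\vocabitem)\to\freqNorm(\vocabitem)$ almost surely.

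One technicality remains: for finite $\N$ the denominator may be zero, for instance if every sampled string is empty. We adopt the convention $\mcest_{\N}=0$ on that event. Since the denominator converges almost surely to a positive limit, it is positive for all sufficiently large $\N$ almost surely, so the convention does not affect the limit.

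For asymptotic unbiasedness we will not try to compute $\E[\mcest_{\N}]$ directly; as a ratio estimator it is genuinely biased for finite $\N$. Instead we use boundedness. Because $\counter(\wordseq,\vocabitem)\le N_{\vocab}(\wordseq)$ holds pathwise, we have $0\le\mcest_{\N}(\vocabitem)\le 1$ for every $\N$, including under the zero convention. Almost sure convergence together with this uniform bound lets the bounded (dominated) convergence theorem give $\E[\mcest_{\N}(\vocabitem)]\to\freqNorm(\vocabitem)$.

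The main obstacle is not deep. It is making the two inequalities $\counter\le N_{\vocab}\le\lvert\wordseq\rvert$ rigorous from the parser assumptions, and handling the zero-denominator event cleanly. Once both are in place, each step is a direct application of the strong law, the continuous mapping theorem, and dominated convergence.
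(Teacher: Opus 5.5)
Your proposal is correct and follows the paper's proof essentially step for step. Both write the estimator as a ratio of sample means, apply the strong law to both coordinates via $0\le\counter\le N_{\vocab}\le\lvert\X\rvert$, and use the continuous mapping theorem at the positive denominator limit, with the same zero-denominator convention. Both then conclude asymptotic unbiasedness by bounded convergence from $0\le\mcest_{\N}\le 1$.
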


\begin{proof}

Recall that $N_{\vocab}\!\left(\wordseq\right) = \sum_{\vocabitem'\in\vocab}\counter\!\left(\wordseq,\vocabitem'\right)$ denotes the number of occurrences in $\wordseq$ of words in $\vocab$, so that the estimator in \cref{eq:mc_est} can be written as
\[
    \mcest_{\N}\!\left(\vocabitem\right)
    =
    \frac{\sum_{n=1}^{\N}\counter\!\left(\X^{\!\left(n\right)},\vocabitem\right)}
         {\sum_{n=1}^{\N} N_{\vocab}\!\left(\X^{\!\left(n\right)}\right)},
\]
with the convention $\mcest_{\N}\!\left(\vocabitem\right) \defeq 0$ when the denominator is zero.
For any \(\N > 0\), define the random vector
\[
    Z_{\N}
    =
    \!\left(
        \frac{1}{\N}
        \sum_{n=1}^{\N}
        \counter\!\left(\X^{\!\left(n\right)}, \vocabitem\right),
        \frac{1}{\N}
        \sum_{n=1}^{\N}
        N_{\vocab}\!\left(\X^{(n)}\right)
    \right).
\]
Further define
$$
    \ell
    =
    \E_{\X \sim \LM}\!\left[N_{\vocab}\!\left(\X\right)\right]
    =
    \sum_{\vocabitem'\in\vocab}\mu\!\left(\vocabitem'\right).
$$
Since every occurrence of a word occupies at least one token position,
\[
    0 \leq \counter\!\left(\X,\vocabitem\right) \leq N_{\vocab}\!\left(\X\right) \leq \lvert\X\rvert,
\]
the assumption \(\E_{\X \sim \LM}\!\left[\lvert\X\rvert\right] < \infty\) implies that
\[
    \E_{\X \sim \LM}\!\left[\counter\!\left(\X,\vocabitem\right)\right] < \infty
    \qquad \text{and} \qquad
    \E_{\X \sim \LM}\!\left[N_{\vocab}\!\left(\X\right)\right] < \infty.
\]
Hence the strong law of large numbers applies to both coordinates of \(Z_{\N}\), and
\begin{equation}
    Z_{\N}
    \xrightarrow{a.s.}
    \!\left(
        \mu\!\left(\vocabitem\right),
        \ell
    \right).
\end{equation}
Now define $g \colon \mathbb{R} \times \!\left(\mathbb{R}\setminus\!\left\{0\right\}\right) \to \mathbb{R}, \qquad
g(x,y) = \frac{x}{y}.$
Since \(g\) is continuous at every point \(\!\left(x,y\right)\) with \(y \neq 0\), and since
  $\ell > 0$ by assumption
(so that, almost surely, the denominator is nonzero for all sufficiently large $\N$),
we can apply the continuous mapping theorem   \citep[Theorem~2.3]{van2000asymptotic}
\[
    \mcest_{\N}\!\left(\vocabitem\right)
    =
    g \!\left(
        \frac{1}{\N}
        \sum_{n=1}^{\N}
        \counter\!\left(\X^{\!\left(n\right)}, \vocabitem\right),
        \frac{1}{\N}
        \sum_{n=1}^{\N}
        N_{\vocab}\!\left(\X^{\!\left(n\right)}\right)
    \right)
    \xrightarrow{a.s.}
    g \!\left(
        \mu\!\left(\vocabitem\right),
        \ell
    \right) = \dfrac{ \mu\!\left(\vocabitem\right)}{ \ell} =   \freqNorm\!\left(\vocabitem\right).
\]
This proves strong consistency.
It remains to show asymptotic unbiasedness.
Since $\mcest_{\N}\!\left(\vocabitem\right) \to \freqNorm\!\left(\vocabitem\right)$ almost surely and, because $\counter\!\left(\X,\vocabitem\right) \leq N_{\vocab}\!\left(\X\right)$, we have  $    0
    \leq
    \mcest_{\N}\!\left(\vocabitem\right)
    \leq
    1$ for every $\N > 0$, by the bounded convergence theorem \citep[Theorem~1.6.7]{durrett2019probability}
\begin{equation}
    \lim_{\N \to \infty}
    \E\!\left[\mcest_{\N}(\vocabitem)\right]
    =
    \E\!\left[
        \lim_{\N \to \infty}
        \mcest_{\N}\!\left(\vocabitem\right)
    \right]
    =
    \freqNorm(\vocabitem).
\end{equation}
Thus
\begin{equation}
    \lim_{\N \to \infty}
    \!\left(
    \E\!\left[\mcest_{\N}(\vocabitem)\right]
    -
    \freqNorm\!\left(\vocabitem\right)
    \right)
    =
    0,
\end{equation}
and the estimator is asymptotically unbiased.
\end{proof}

\clearpage

\section{Supplementary Estimator Results}
\label{app:ap_var}

\subsection{Variance Analysis}

\begin{figure*}[h]
    \centering
    \includegraphics[width=0.9\linewidth]{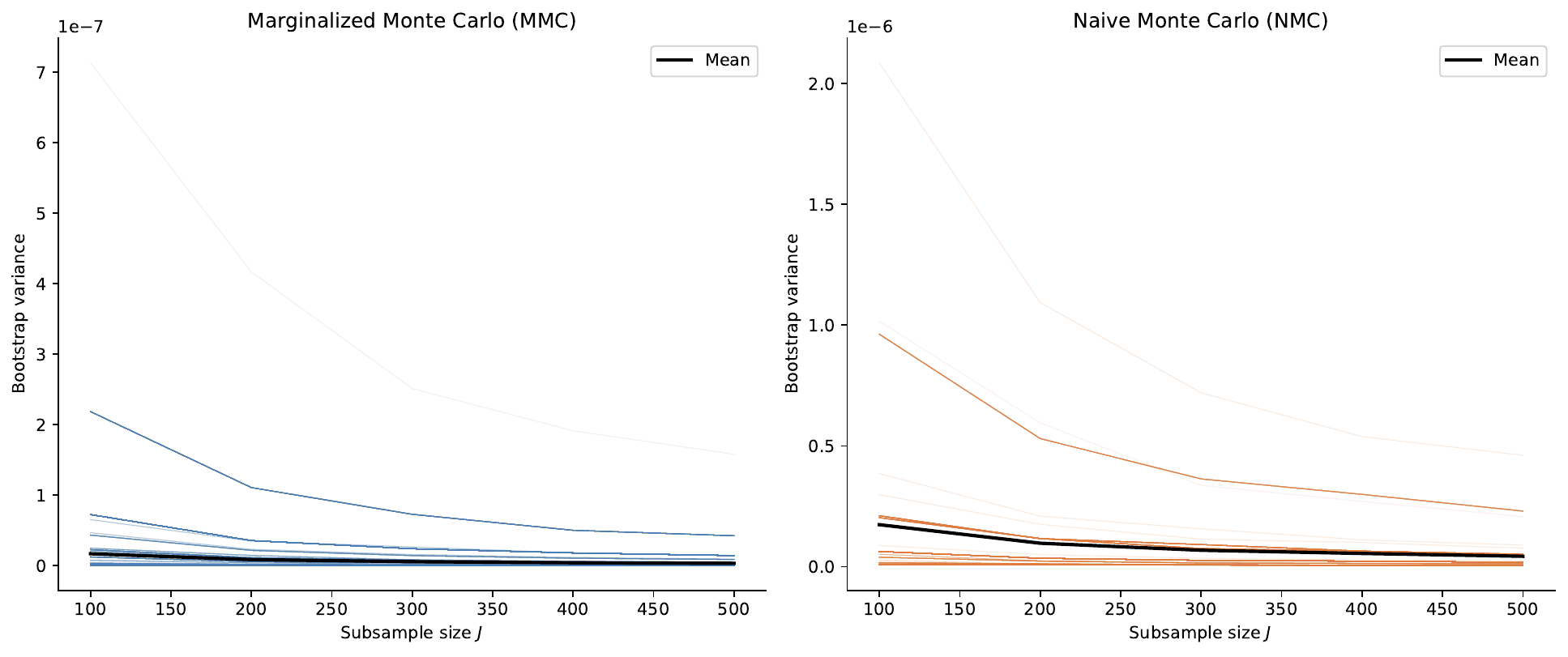}
    \caption{Visualization of the bootstrapped variance estimates of our marginalized and naive Monte Carlo estimators of unigram frequency. Each line represents a single unigram from the English MECO data. The ``Mean'' lines refer to the average variance across all unigrams.}
    \label{fig:variance_comp}
\end{figure*}

\subsection{Naive Monte Carlo Coverage}

\begin{figure*}[h]
    \centering
    \includegraphics[width=0.9\linewidth]{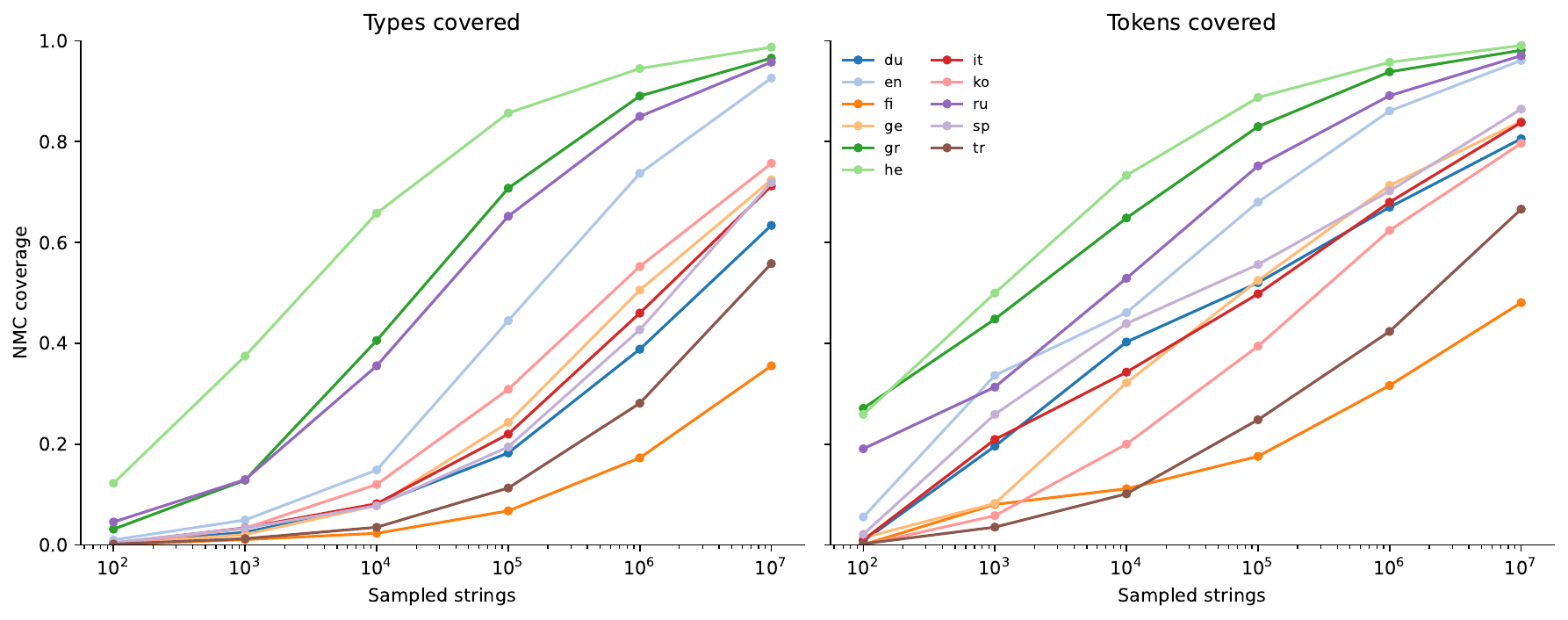}
    \caption{Proportion of types and tokens in the MECO corpus for which estimates can be obtained from our Naive Monte Carlo estimator at a range of sample sizes. Samples are obtained from mGPT with the maximum length hyperparameter set to 50 tokens. Estimates cannot be obtained when the target type/token never appears in the $N$ sampled strings.}
    \label{fig:nmc_coverage}
\end{figure*}

\subsection{Runtime Analysis}

We choose five unigrams each of lengths 1--4 tokens from the English MECO dataset and estimate their unigram probability five times each with the naive MC algorithm in \cref{eq:mc_est} and the marginalized variant in \cref{eq:rb_est_02}. Experiments are run on an NVIDIA H100 NVL GPU. 

\begin{table}[t]
\centering
\begin{tabular}{ccc}
\toprule
Tokens & NMC (s) & MMC (s) \\
\midrule
  1 & 3.788 (0.003) & 86.269 (0.101) \\
  2 & 3.788 (0.003) & 166.344 (0.161) \\
  3 & 3.787 (0.002) & 247.479 (0.105) \\
  4 & 3.783 (0.008) & 329.654 (0.153) \\
\bottomrule
\end{tabular}
\caption{Mean (SD) wall-clock time per estimate for the Naive (NMC) and Marginalized (MMC) Monte Carlo estimators,
broken down by the number of subword tokens in the target unigram ($N=500$ samples, batch size 128).}
\label{tab:runtime}
\end{table}

Our results demonstrate that for unigrams of token length $T=1$, both estimators incur similar costs, which are mainly from the generation of $N=500$ strings. For multi-token unigrams, however, the MMC estimator scales with token count while times for the NMC estimator are stable. This is a consequence of our estimation of the joint probability of all the tokens in multi-token unigrams, which requires $T-1$ additional rounds of batched forward passes over all positions in the sampled contexts. Still, this cost is paid only once per unit in the dataset, and the faster NMC estimates for multi-token words come at the cost of higher variance estimates relative to the MMC estimator (see \cref{tab:var} and \cref{fig:variance_comp}).

\section{Experiment Details}
\label{sec:ap_data}

\subsection{Reading Time Data}
\begin{table}[h]
\centering
\small
\begin{tabular}{lccccc} \toprule
Language & Participants & Word Types & Observations & Train Size & Test Size \\ \midrule
Dutch    & 45 & 662 & 31680 & 28512 & 3168 \\
English  & 46 & 930 & 55120 & 49608 & 5512 \\
Finnish  & 49 & 306 & 14853 & 13368 & 1485 \\
German   & 45 & 753 & 43942 & 39548 & 4394 \\
Greek    & 45 & 945 & 39972 & 35975 & 3997 \\
Hebrew   & 47 & 1303 & 49430 & 44487 & 4943 \\
Italian  & 54 & 791 & 48097 & 43287 & 4810 \\
Korean   & 32 & 916 & 15364 & 13828 & 1536 \\
Spanish  & 48 & 831 & 48448 & 43603 & 4845 \\
Turkish  & 29 & 528 & 12083 & 10875 & 1208 \\
Russian  & 46 & 1158 & 46641 & 41977 & 4664 \\ \bottomrule
\end{tabular}
\caption{Per-language counts for the NMC estimator: observations entering the gaze-duration model are per participant.  Train/test split refer to the size of the folds used to fit and test each of the GAMMs.}
\label{tab:descriptives_nmc}
\end{table}

\begin{table}[h]
\centering
\small
\begin{tabular}{lccccc} \toprule
Language & Participants & Word Types & Observations & Train Size & Test Size \\ \midrule
Dutch    & 45 & 996 & 44816 & 40334 & 4482 \\
English  & 46 & 945 & 55792 & 50213 & 5579 \\
Finnish  & 49 & 1042 & 49783 & 44805 & 4978 \\
German   & 45 & 1037 & 58171 & 52354 & 5817 \\
Greek    & 45 & 845 & 35558 & 32002 & 3556 \\
Hebrew   & 47 & 954 & 36248 & 32623 & 3625 \\
Italian  & 54 & 1051 & 61687 & 55518 & 6169 \\
Korean   & 32 & 1208 & 21057 & 18951 & 2106 \\
Spanish  & 48 & 1036 & 58185 & 52367 & 5818 \\
Turkish  & 29 & 1032 & 22935 & 20641 & 2294 \\
Russian  & 46 & 1133 & 45621 & 41059 & 4562 \\ \bottomrule
\end{tabular}
\caption{Per-language counts for the MMC estimator: observations entering the gaze-duration model are per participant.  Train/test split refer to the size of the folds used to fit and test each of the GAMMs.}
\label{tab:descriptives_mmc}
\end{table}

\begin{table}[h]
\centering
\small
\begin{tabular}{lccccc} \toprule
Language & Participants & Word Types & Observations & Train Size & Test Size \\ \midrule
Dutch    & 45 & 1009 & 45314 & 40783 & 4531 \\
English  & 46 & 980 & 57883 & 52095 & 5788 \\
Finnish  & 49 & 1076 & 51962 & 46766 & 5196 \\
German   & 45 & 1047 & 58861 & 52975 & 5886 \\
Greek    & 45 & 975 & 41229 & 37106 & 4123 \\
Hebrew   & 47 & 1319 & 50126 & 45113 & 5013 \\
Italian  & 54 & 1071 & 62485 & 56237 & 6248 \\
Korean   & 32 & 1251 & 21739 & 19565 & 2174 \\
Spanish  & 48 & 1045 & 58554 & 52699 & 5855 \\
Turkish  & 29 & 1056 & 23440 & 21096 & 2344 \\
Russian  & 46 & 1194 & 48066 & 43259 & 4807 \\ \bottomrule
\end{tabular}
\caption{Per-language counts for the MMC (Approx.) estimator: observations entering the gaze-duration model are per participant.  Train/test split refer to the size of the folds used to fit and test each of the GAMMs.}
\label{tab:descriptives_mmc_approx}
\end{table}

\begin{table}[h]
\centering
\small
\begin{tabular}{lccccc} \toprule
Language & Participants & Word Types & Observations & Train Size & Test Size \\ \midrule
Dutch    & 45 & 907 & 41247 & 37122 & 4125 \\
English  & 46 & 965 & 56924 & 51232 & 5692 \\
Finnish  & 49 & 823 & 39298 & 35368 & 3930 \\
German   & 45 & 950 & 53534 & 48181 & 5353 \\
Greek    & 45 & 957 & 40455 & 36409 & 4046 \\
Hebrew   & 47 & 1303 & 49454 & 44509 & 4945 \\
Italian  & 54 & 995 & 58600 & 52740 & 5860 \\
Korean   & 32 & 1118 & 19156 & 17240 & 1916 \\
Spanish  & 48 & 994 & 56012 & 50411 & 5601 \\
Turkish  & 29 & 894 & 20149 & 18134 & 2015 \\
Russian  & 46 & 1176 & 47253 & 42528 & 4725 \\ \bottomrule
\end{tabular}
\caption{Per-language counts for the Priority MMC estimator: observations entering the gaze-duration model are per participant.  Train/test split refer to the size of the folds used to fit and test each of the GAMMs.}
\label{tab:descriptives_pmmc}
\end{table}

\begin{table}[h]
\centering
\small
\begin{tabular}{lccccc} \toprule
Language & Participants & Word Types & Observations & Train Size & Test Size \\ \midrule
Dutch    & 45 & 1009 & 45314 & 40783 & 4531 \\
English  & 46 & 980 & 57883 & 52095 & 5788 \\
Finnish  & 49 & 1076 & 51962 & 46766 & 5196 \\
German   & 45 & 1044 & 58707 & 52836 & 5871 \\
Greek    & 45 & 975 & 41229 & 37106 & 4123 \\
Hebrew   & 47 & 1319 & 50126 & 45113 & 5013 \\
Italian  & 54 & 1071 & 62485 & 56237 & 6248 \\
Korean   & 32 & 1249 & 21702 & 19532 & 2170 \\
Spanish  & 48 & 1045 & 58554 & 52699 & 5855 \\
Turkish  & 29 & 1056 & 23440 & 21096 & 2344 \\
Russian  & 46 & 1194 & 48066 & 43259 & 4807 \\ \bottomrule
\end{tabular}
\caption{Per-language counts for the Priority MMC (Approx.) estimator: observations entering the gaze-duration model are per participant.  Train/test split refer to the size of the folds used to fit and test each of the GAMMs.}
\label{tab:descriptives_pmmc_approx}
\end{table}

All reading time data used in our models is obtained from the L1 portion of the MECO corpus \citep{siegelman2022expanding}. Our selection of the eleven languages used here represents the overlap between languages in MECO and those supported by the LM used in our experiments, mGPT. In preprocessing, we remove words skipped by participants and obtain first fixation duration (i.e., the length of the participant's first fixation on a word), gaze duration (the total time a participant fixates on a word before leaving it the first time) and the total fixation duration (the total time a participant fixates on a word regardless of whether they left the word and came back to it later). In their description of MECO, \citet{siegelman2022expanding} report on their procedure for aligning fixations to words and filtering data due to poor eye-tracker calibration. Tables throughout the section give the fold sizes in the 10-fold cross-validation used for GAMM evaluation (see \cref{sec:exp}), which differ between estimators due to differences in coverage.

mGPT is a 1.3B parameter decoder-only language model pretrained on data from 61 languages. \citet{shliazhko-etal-2024-mgpt} report using Wikipedia documents and the C4 dataset to pretrain mGPT. \texttt{wordfreq} estimates are fit with data from Wikipedia, subtitle and news corpora, books, web crawls, and social media. While there it is feasible that there is some overlap in the domain and content of the two resources' data sources, they are not fit on identical data.

\subsection{Model Specification and Evaluation}

As an example of how our GAMMs of reading times are structured, the formula for our surprisal-only model is as follows:
\begin{equation}
    \label{eq:gam_surp}
    \begin{split}
    \text{RT}_i = \beta_0 + f_1\!\left(s_{\text{context},i}\right) + 
    f_2\!\left(s_{\text{context},i-1}\right) \\
    + b_{\text{participant}\left(i\right)} +
    b_{\text{trial}\left(i\right)} + \epsilon_i.
    \end{split}
\end{equation}
\noindent Above, $f_1$ and $f_2$ are main-effect smooths of the surprisal of the current and previous unit, respectively. $b_{\text{participant}(i)}$ and $b_{\text{trial}(i)}$ represent random effects that account for participant and trial variation. We specifically implement main-effect smooths as cubic regression splines limited to six basis functions to avoid overfitting, as in \citet{opedal-etal-2024-role}. $\epsilon_i$ represents a Gaussian noise term.  
Predictors are standardized prior to model fitting, and we fit all GAMMs using the \texttt{bam()} function of the \texttt{mgcv} package \citep{wood_generalized_2006}.

Delta log-likelihood is calculated over held out data as follows:

\begin{align}
    \label{eq:dll}
    \dll = \frac{1}{\N}\sum_{n=1}^{\N} \Big[&\log p_{\text{model}}\left(\text{RT}_n\right) \notag \\
    &- \log p_{\text{baseline}}\!\left(\text{RT}_n\right)\Big].
\end{align}\looseness=-1

\subsection{Estimator Configurations}
\label{sec:ap_config}
 
Here we provide implementation details on the six sets of unigram surprisals compared in \cref{sec:results}. The \texttt{wordfreq} values \citep{robyn_speer_2022} are the corpus-based estimates used by \citet{opedal-etal-2024-role}. The NMC estimates (\cref{eq:mc_est}) are computed from a large pool of $\N = 4\times10^7$ samples, in order to circumvent the issue of poor coverage documented in \cref{app:ap_var}. The MMC estimates (\cref{eq:rb_est_02}) and their canonical-only approximations (\cref{eq:mmc_approx}; MMC and MMC Approx.) are fit over $\N = 500$ samples. The priority-MMC estimates and their canonical-only approximations (Priority MMC and Priority MMC Approx.) use language-specific strata drawn from a pool of $\N = 4\times10^7$ samples to balance language coverage; see \cref{sec:ap_pps}.

\newpage

\section{Variation across Languages}
\label{sec:ap_sup_exp}

Where we report $\dll$ averaged across languages (e.g., \cref{fig:rt_GAMM}), we specifically report micro-averaged $\dll$ over all of the word-level observations, pooled across the 11 languages. This means that languages with more data (see \cref{sec:ap_data}) contribute more to the means.  Throughout this section, we present the macro-averaged $\dll$ and their standard deviations for the models of gaze duration. Values are presented separately for each estimator, corresponding to each of the facets in \cref{fig:rt_GAMM}.

\begin{table}[h]
\small
\centering
\begin{tabular}{lccc} \toprule
\multicolumn{1}{l}{Unigram Surprisal Metric} & Model & Macro Averages & SD Across Languages \\ \midrule
\multirow{5}{*}{NMC Estimate} & Uni. Surp.               & 0.0265 & 0.0114 \\
& PMI                      & 0.0071 & 0.0062 \\
& PMI \& Uni. Surp.        & 0.0286 & 0.0121 \\
& Con. Surp.               & 0.0118 & 0.0049 \\
& Con. Surp. \& Uni. Surp. & 0.0283 & 0.0119 \\ \midrule
\multirow{5}{*}{Corpus Estimate} & Uni. Surp.               & 0.0282 & 0.0126 \\
& PMI                      & 0.0076 & 0.0057 \\
& PMI \& Uni. Surp.        & 0.0304 & 0.0132 \\
& Con. Surp.               & 0.0118 & 0.0049 \\
& Con. Surp. \& Uni. Surp. & 0.0299 & 0.0128 \\ \bottomrule
\end{tabular}
\caption{Word-level $\Delta$LL for GAMMs of gaze duration, macro-averaged across languages. Values correspond to the NMC estimator versus the Corpus baseline (fit on NMC's support).}
\label{tab:xlang_nmc}
\end{table}

\begin{table}[h]
\centering
\small
\begin{tabular}{lccc} \toprule
\multicolumn{1}{l}{Unigram Surprisal Metric} & Model & Macro Averages & SD Across Languages \\ \midrule
\multirow{5}{*}{MMC Estimate} & Uni. Surp.               & 0.0383 & 0.0195 \\
& PMI                      & 0.0122 & 0.0085 \\
& PMI \& Uni. Surp.        & 0.0415 & 0.0205 \\
& Con. Surp.               & 0.0211 & 0.0100 \\
& Con. Surp. \& Uni. Surp. & 0.0413 & 0.0204 \\ \midrule
\multirow{5}{*}{Corpus Estimate} & Uni. Surp.               & 0.0396 & 0.0211 \\
& PMI                      & 0.0103 & 0.0070 \\
& PMI \& Uni. Surp.        & 0.0430 & 0.0216 \\
& Con. Surp.               & 0.0211 & 0.0100 \\
& Con. Surp. \& Uni. Surp. & 0.0426 & 0.0214 \\ \bottomrule
\end{tabular}
\caption{Word-level $\Delta$LL for GAMMs of gaze duration, macro-averaged across languages. Values correspond to the MMC estimator versus the Corpus baseline (fit on MMC's support).}
\label{tab:xlang_mmc}
\end{table}

\begin{table}[h]
\centering
\small
\begin{tabular}{lccc} \toprule
\multicolumn{1}{l}{Unigram Surprisal Metric} & Model & Macro Averages & SD Across Languages \\ \midrule
\multirow{5}{*}{MMC (Approx.) Estimate} & Uni. Surp.               & 0.0391 & 0.0178 \\
& PMI                      & 0.0130 & 0.0085 \\
& PMI \& Uni. Surp.        & 0.0424 & 0.0191 \\
& Con. Surp.               & 0.0214 & 0.0105 \\
& Con. Surp. \& Uni. Surp. & 0.0421 & 0.0191 \\ \midrule
\multirow{5}{*}{Corpus Estimate} & Uni. Surp.               & 0.0396 & 0.0211 \\
& PMI                      & 0.0101 & 0.0069 \\
& PMI \& Uni. Surp.        & 0.0433 & 0.0217 \\
& Con. Surp.               & 0.0214 & 0.0105 \\
& Con. Surp. \& Uni. Surp. & 0.0429 & 0.0215 \\ \bottomrule
\end{tabular}
\caption{Word-level $\Delta$LL for GAMMs of gaze duration, macro-averaged across languages. Values correspond to the MMC (Approx.) estimator versus the Corpus baseline (fit on MMC (Approx.)'s support).}
\label{tab:xlang_mmc_approx}
\end{table}

\begin{table}[h]
\centering
\small
\begin{tabular}{lccc} \toprule
\multicolumn{1}{l}{Unigram Surprisal Metric} & Model & Macro Averages & SD Across Languages \\ \midrule
\multirow{5}{*}{Priority MMC Estimate} & Uni. Surp.               & 0.0392 & 0.0175 \\
& PMI                      & 0.0111 & 0.0078 \\
& PMI \& Uni. Surp.        & 0.0421 & 0.0184 \\
& Con. Surp.               & 0.0211 & 0.0104 \\
& Con. Surp. \& Uni. Surp. & 0.0416 & 0.0184 \\ \midrule
\multirow{5}{*}{Corpus Estimate} & Uni. Surp.               & 0.0387 & 0.0198 \\
& PMI                      & 0.0104 & 0.0069 \\
& PMI \& Uni. Surp.        & 0.0426 & 0.0204 \\
& Con. Surp.               & 0.0211 & 0.0104 \\
& Con. Surp. \& Uni. Surp. & 0.0422 & 0.0202 \\ \bottomrule
\end{tabular}
\caption{Word-level $\Delta$LL for GAMMs of gaze duration, macro-averaged across languages. Values correspond to the Priority MMC estimator versus the Corpus baseline (fit on Priority MMC's support).}
\label{tab:xlang_pmmc}
\end{table}

\begin{table}[h]
\centering
\small
\begin{tabular}{lccc} \toprule
\multicolumn{1}{l}{Unigram Surprisal Metric} & Model & Macro Averages & SD Across Languages \\ \midrule
\multirow{5}{*}{Priority MMC (Approx.) Estimate} & Uni. Surp.               & 0.0412 & 0.0201 \\
& PMI                      & 0.0110 & 0.0087 \\
& PMI \& Uni. Surp.        & 0.0436 & 0.0208 \\
& Con. Surp.               & 0.0214 & 0.0105 \\
& Con. Surp. \& Uni. Surp. & 0.0433 & 0.0207 \\ \midrule
\multirow{5}{*}{Corpus Estimate} & Uni. Surp.               & 0.0397 & 0.0211 \\
& PMI                      & 0.0101 & 0.0069 \\
& PMI \& Uni. Surp.        & 0.0433 & 0.0217 \\
& Con. Surp.               & 0.0214 & 0.0105 \\
& Con. Surp. \& Uni. Surp. & 0.0429 & 0.0216 \\ \bottomrule
\end{tabular}
\caption{Word-level $\Delta$LL for GAMMs of gaze duration, macro-averaged across languages: Values correspond to the Priority MMC (Approx.) estimator versus the Corpus baseline (fit on Priority MMC (Approx.)'s support).}
\label{tab:xlang_pmmc_approx}
\end{table}

\newpage

\section{Estimator Pseudocode}
\label{sec:ap_pseudocode}

\begin{algorithm}
\caption{Naive Monte Carlo Unigram Probability Estimator}
\begin{algorithmic}[1]
\Require Word $\vocabitem$, samples $\X_1, \ldots, \X_N$ where each $\X_n = \!\left(x_1, \ldots, x_{L_n}\right)$ is a token string
\State $\text{total Count} \gets 0$
\State $\text{total Words} \gets 0$
\For{each $n$ in range N}
    \State $\text{total Count} \gets \text{total Count} + \counter\!\left(\X_n, \vocabitem\right)$
    \State $\text{total Words} \gets \text{total Words} + \lvert\pi\!\left(\X_n\right)\rvert$
\EndFor
\State \Return $\dfrac{\text{total Count}}{\text{total Words}}$
\end{algorithmic}
\end{algorithm}

\begin{algorithm}

\caption{Marginalized Monte Carlo Unigram Probability Estimator (Canonical-Only Variant)}
\begin{algorithmic}[1]
\Require Word $\vocabitem$, samples $\X_1, \ldots, \X_N$ where each $\X_n = \!\left(x_1, \ldots, x_{L_n}\right)$ is a token string
\State $\!\left[t_1, \ldots, t_T\right] \gets \text{tokenize}\!\left(\vocabitem\right)$
\State $\text{total Prob} \gets 0$
\State $\text{total Words} \gets 0$
\For{each $n$ in range $N$}
    \State $B_n \gets \!\left\{0\right\} \cup \!\left\{b \colon x_{b+1} \text{ is unit-initial}\right\} \cup \!\left\{L_n\right\}$
    \State $\text{total Words} \gets \text{total Words} + \lvert\pi\!\left(\X_n\right)\rvert$
    \For{each boundary position $b \in B_n$}
        \State $\text{prefix} \gets \!\left(x_1, \ldots, x_b\right)$
        \State $\text{prob} \gets \prefixLM\!\left(t_1 \mid \text{prefix}\right)$
        \For{$i = 2, \ldots, T$}
            \State $\text{prob} \gets \text{prob} \cdot \prefixLM\!\left(t_i \mid \text{prefix} \circ [t_1, \ldots, t_{i-1}]\right)$
        \EndFor
        \State $\text{total Prob} \gets \text{total Prob} + \text{prob}$
    \EndFor
\EndFor
\State \Return $\dfrac{\text{total Prob}}{\text{total Words}}$
\end{algorithmic}
\label{alg:mmc}
\end{algorithm}

\newpage

\section{Priority-Sampled Evaluation at Large Sample Sizes}
\label{sec:ap_pps}

\begin{algorithm}
\caption{Priority-Sampled Marginalized Monte Carlo Estimator (canonical-only variant)}
\begin{algorithmic}[1]
\Require Word $\vocabitem$, frame $\mathcal{F}$ partitioned into strata $\!\left\{\mathcal{F}_g\right\}$, budgets $\!\left\{k_g\right\}$, pool word count $\ell_{\mathrm{pool}}$, first-token probabilities $s_b$ from the sweep
\State $\!\left[t_1, \ldots, t_T\right] \gets \text{tokenize}\!\left(\vocabitem\right)$
\If{$T = 1$}
    \State \Return $\sum_{b \in \mathcal{F}} s_b \,/\, \ell_{\mathrm{pool}}$
\EndIf
\State $\text{total} \gets 0$
\For{each stratum $g$}
    \State $q_b \gets s_b / u_b$ \textbf{ for } $b \in \mathcal{F}_g$, \textbf{ with } $u_b \sim \text{Uniform}(0,1]$
    \State $\mathcal{S}_g \gets$ the $k_g$ boundaries of $\mathcal{F}_g$ with largest $q_b$
    \State $\tau_g \gets$ the $\!\left(k_g + 1\right)$-th largest $q_b$, or $0$ if $\lvert \mathcal{F}_g \rvert \leq k_g$
    \For{each $b \in \mathcal{S}_g$}
        \State $c_b \gets 1$
        \For{$i = 2, \ldots, T$}
            \State $c_b \gets c_b \cdot \prefixLM\!\left(t_i \mid \text{prefix}_b \circ [t_1, \ldots, t_{i-1}]\right)$
        \EndFor
        \State $\text{total} \gets \text{total} + \max\!\left(s_b, \tau_g\right) \cdot c_b$
    \EndFor
\EndFor
\State \Return $\text{total} \,/\, \ell_{\mathrm{pool}}$
\end{algorithmic}
\label{alg:priority-sampled-mmc}
\end{algorithm}

\Cref{alg:mmc} evaluates $\prefixLM$ at every boundary of every sample. A unit whose probability mass lies in contexts that are rare under $\LM$ requires a large sample size $\N$ before any such context is drawn. Sampling and the sweep that returns every unit's first-token probability are paid once, no matter how many units are estimated. However, a unit of $T$ tokens requires $T-1$ forward passes at each boundary, computed separately for each unit and linear in the pool size (\cref{tab:runtime}). We therefore keep the sweep exhaustive and evaluate the continuation only at a bounded set of boundaries, chosen via priority sampling \citep{duffield2007priority}. We present two variants: a canonical-only variant (\cref{alg:priority-sampled-mmc}; Priority MMC Approx.\ in \cref{sec:results}), and a full estimator (Priority MMC), which restores the exact-count $\eos$ correction of \cref{eq:d_contrib}.

Let $\mathcal{F}$ collect every boundary position of every sample. For a word $\vocabitem$ with tokenization $\left[t_1, \ldots, t_T\right]$ we write
\begin{equation}
    s_b = \prefixLM\left(t_1 \mid \text{prefix}_b\right),
    \qquad
    c_b = \prod_{i=2}^{T} \prefixLM\left(t_i \mid \text{prefix}_b \circ [t_1, \ldots, t_{i-1}]\right),
\end{equation}
with $c_b = 1$ when $T = 1$. Summing these terms and normalizing, $\sum_{b \in \mathcal{F}} s_b c_b / \ell_{\mathrm{pool}}$, gives the canonical-only variant, writing $\ell_{\mathrm{pool}}$ for the realized word count of the pool, which approximates \cref{eq:rb_est_02}: (i) the summands are evaluated only at positions whose sampled successor is unit-initial, (ii) $c_b$ omits the right-boundary mass of \cref{ap:boundaries}, and (iii) residual occurrences are dropped (\cref{eq:mmc_approx}). The full estimator below removes these three approximations.

\begin{proposition}
\label{prop:pps_unbiased}
Fix a stratum $g$ and condition on the pool. Then
\begin{equation}
\E\left[\sum_{b \in \mathcal{S}_g} \max\!\left(s_b, \tau_g\right) c_b\right] = \sum_{b \in \mathcal{F}_g} s_b c_b
\end{equation}
\end{proposition}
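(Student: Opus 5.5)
The plan is the standard priority-sampling argument of \citet{duffield2007priority}, run one boundary at a time and then summed by linearity of expectation. Conditioning on the pool fixes $\mathcal{F}_g$, $k_g$, every $s_b$ and every $c_b$, since these are deterministic LM evaluations on fixed prefixes. The only randomness left is the i.i.d.\ uniforms $u_b$, $b\in\mathcal{F}_g$. Write the left-hand side as $\sum_{b\in\mathcal{F}_g} c_b\,\E\bigl[\mathbf 1\{b\in\mathcal{S}_g\}\max(s_b,\tau_g)\bigr]$. It then suffices to show, for each fixed $b$,
\begin{equation*}
\E\bigl[\mathbf 1\{b\in\mathcal{S}_g\}\max(s_b,\tau_g)\bigr]=s_b .
\end{equation*}

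First I dispose of the degenerate cases. If $\lvert\mathcal{F}_g\rvert\le k_g$, every boundary is selected and $\tau_g=0$, so each term equals $s_b$ deterministically. If $s_b=0$, then $q_b=0$ and the term is $0$ whether or not $b$ is selected (taking $\tau_g\cdot 0=0$ formally). Ties among the $q$'s occur with probability zero when the $s$'s are positive, so I will ignore them.

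The main step is a conditioning trick. Fix $b$ with $s_b>0$ and condition on $\{u_{b'}\}_{b'\ne b}$. Let $\tau'_b$ be the $k_g$-th largest value of $q_{b'}$ among $b'\ne b$, or $0$ if there are fewer than $k_g$ others. Two facts drive the argument.
\begin{enumerate}
\item $b\in\mathcal{S}_g$ if and only if $q_b>\tau'_b$.
\item On that event, the $(k_g{+}1)$-th largest $q$ overall is exactly $\tau'_b$, so $\tau_g=\tau'_b$.
\end{enumerate}
Hence the conditional expectation equals
\begin{equation*}
\Pr{s_b/u_b>\tau'_b}\,\max(s_b,\tau'_b)=\min\bigl(1,s_b/\tau'_b\bigr)\max(s_b,\tau'_b)=s_b,
\end{equation*}
using $u_b\sim\mathrm{Uniform}(0,1]$. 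When $\tau'_b=0$, the probability is $1$ and the factor is $s_b$. Taking the outer expectation over $\{u_{b'}\}_{b'\ne b}$ gives the claim, and summing over $b$ with weights $c_b$ finishes the proof.

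The step needing the most care is the identity $\tau_g=\tau'_b$ on the event $b\in\mathcal{S}_g$. It is what makes the random weight $\max(s_b,\tau_g)$ conditionally constant. I would verify it by a short order-statistics argument: inserting $q_b$ above the $k_g$-th largest of the others shifts that value to rank $k_g{+}1$.
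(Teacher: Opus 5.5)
Your proposal is correct and is essentially the paper's argument: condition on the other priorities, note that $b$ is retained iff $q_b$ exceeds the $k_g$-th largest of the others and that this threshold is then exactly $\tau_g$, and conclude with $\max(s_b,\tau)\min(1,s_b/\tau)=s_b$ and linearity of expectation. Your handling of $s_b=0$, $\tau'_b=0$, and ties goes beyond what the paper spells out, though the real reason the $s_b=0$ term vanishes is that selecting $b$ with $q_b=0$ forces $\tau_g\le q_b=0$, hence $\max(s_b,\tau_g)=0$, not a formal convention $\tau_g\cdot 0=0$.
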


\begin{proof}
If $\lvert \mathcal{F}_g \rvert \leq k_g$ then $\mathcal{S}_g = \mathcal{F}_g$ and $\tau_g = 0$,
so the two sides agree term by term. Otherwise fix $b$ and let $\tau_{-b}$ be the $k_g$-th largest
priority among $\mathcal{F}_g \setminus \{b\}$, a function of $\!\left\{u_{b'}\right\}_{b' \neq b}$
and so independent of $u_b$. Boundary $b$ is retained exactly when $q_b > \tau_{-b}$, and then the
$\!\left(k_g+1\right)$-th largest priority overall is $\tau_{-b}$, so
\begin{equation}
    \E\!\left[\max\!\left(s_b, \tau_g\right) c_b \mathbbm{1}\!\left\{b \in \mathcal{S}_g\right\} \,\middle|\, \tau_{-b}\right]
    = \max\!\left(s_b, \tau_{-b}\right) \min\!\left(1, \tfrac{s_b}{\tau_{-b}}\right) c_b
    = s_b c_b ,
\end{equation}
by $\Pr{u_b < s_b/\tau_{-b} \mid \tau_{-b}} = \min\!\left(1, s_b/\tau_{-b}\right)$ and
$\max\!\left(w, \tau\right)\min\!\left(1, w/\tau\right) = w$. 
The right-hand side does not involve $\tau_{-b}$, so averaging over it leaves $\E\!\left[\max\!\left(s_b, \tau_g\right) c_b \mathbbm{1}\!\left\{b \in \mathcal{S}_g\right\}\right] = s_b c_b$. Since
\begin{equation}
\sum_{b \in \mathcal{S}_g} \max\!\left(s_b, \tau_g\right) c_b = \sum_{b \in \mathcal{F}_g} \max\!\left(s_b, \tau_g\right) c_b\mathbbm{1}\!\left\{b \in \mathcal{S}_g\right\},
\end{equation}
summing this expectation over $b \in \mathcal{F}_g$ gives the claim by linearity of expectation.
\end{proof}

\paragraph{Stratification.} We form strata from each boundary's own prefix and refine the Latin script class with language detection. \Cref{prop:pps_unbiased} holds within each stratum, so the sum is unbiased for any allocation $\!\left(k_g\right)_g$ and the allocation may be chosen to reduce variance. Two conditions are needed: every stratum carrying mass must receive $k_g \geq 1$, since an unfunded stratum contributes nothing while its mass stays in the estimand, and the allocation must not depend on the draws, which holds because it comes from the column sums. Labels therefore enter the allocation and never a weight, so a mislabeled boundary costs precision but cannot bias the estimate.

\paragraph{Full estimator.}
The full estimator removes the three approximations in \cref{alg:priority-sampled-mmc}: 
\begin{enumerate}[label=(\roman*)]
    \item An interior position enters $\mathcal{F}$ only when its sampled successor is unit-initial, an event of probability $W_b$, the unit-initial mass of $\prefixLM\left(\cdot \mid \text{prefix}_b\right)$, so $s_b = \prefixLM\left(t_1 \mid \text{prefix}_b\right)/W_b$.
    \item The continuation $c_b$ adds the right-boundary factor of \cref{ap:boundaries}, i.e., the mass on unit-initial tokens and $\eos$ after $t_T$.
    \item Residual occurrences are recovered through $C$, which is the exact count of the unit over the pool, taken on the decoded strings so that every tokenization contributes.
\end{enumerate} 
Writing $\mathcal{T} \defeq \left\{b \in \mathcal{F} \colon \text{the sampled successor at } b \text{ is } t_1\right\}$, the full estimator is $\widehat{N}\left(\vocabitem\right) \defeq \widehat{S}\left(\vocabitem\right) + C - \widehat{H}\left(\vocabitem\right)$, where $\widehat{S}$ priority-samples $\sum_{b} s_b c_b$ over the interior boundaries and adds the exact empty-prefix term, common to all $\N$ samples, and $\widehat{H}$ estimates $\sum_{b \in \mathcal{T}} c_b$ from at most $k_{\mathrm{take}}$ positions of $\mathcal{T}$ drawn uniformly per first token, each weighted by the ratio of available to drawn positions. Since the two share the summand $c_b$, the tower property gives $\E\!\left[\widehat{S}\right] = \E\!\left[\widehat{H}\right]$, so $\widehat{N}$ is unbiased for $\N$ times the expected count of \cref{eq:freq01} under the length-truncated sampling process of \cref{sec:exp}, and the reported estimate is $\widehat{N}/\ell_{\mathrm{pool}}$. A negative realized numerator, which occurs for units far below the pool's resolution, leaves the estimate undefined and the unit excluded. Unbiasedness refers to $\widehat{N}$ before this exclusion.

\paragraph{Settings.} \Cref{tab:pps_settings} shows the sampling parameters of both variants. We allocated the per-unit budget $k$ in three steps: we divide a fixed share among the strata of the languages in which the unit occurs, half of it going to a generic Latin stratum for languages written in Latin script. Every stratum that carries mass receives a floor of $64$ boundaries, the rest is divided in proportion to each stratum's share of the unit's first-token mass, capped at the stratum's total from the first two steps. Any residue is divided by mass alone, such that the full budget is spent whenever the stored cells can serve it.

\begin{table}[h]
\centering
\begin{tabular}{lcc}
\toprule
& Approx. & Full \\
\midrule
Samples $\N$ & \multicolumn{2}{c}{$4 \times 10^{7}$} \\
Maximum length & \multicolumn{2}{c}{$50$} \\
Strata & \multicolumn{2}{c}{$21$} \\
Per-unit budget $k$ & \multicolumn{2}{c}{$8192$} \\
Stratum floor & \multicolumn{2}{c}{$64$} \\
Home share & $4096$ & $2048$ \\
Draw seed & $100000$ & $900000$ \\
Frame size $\lvert \mathcal{F} \rvert$ & $8.9 \times 10^{8}$ & $9.2 \times 10^{8}$ \\
Pool word count $\ell_{\mathrm{pool}}$ & $8.8 \times 10^{8}$ & $8.8 \times 10^{8}$ \\
Take draws $k_{\mathrm{take}}$ & n/a & $2048$ \\
Single-token units & exact & sampled \\
\bottomrule
\end{tabular}
\caption{Settings of the priority-sampled variants (Priority MMC Approx.\ and Priority MMC). Each variant draws its own pool of samples. The full variant's pool additionally stores the model's own sampled tokenizations for $\widehat{H}$. Approximate-variant single-token units need no sampling, since their numerator is the exact column sum plus the exact empty-prefix term. The full estimator's closing mass depends on the unit, so its single-token units are sampled like any other.}
\label{tab:pps_settings}
\end{table}

\end{document}